\documentclass{article}

\usepackage{natbib} 
\usepackage{microtype}
\usepackage{graphicx}
\usepackage{subcaption}
\usepackage{booktabs} 

\usepackage{tikz}
\usetikzlibrary{arrows.meta, positioning, decorations.pathreplacing, calc, fit}

\newcommand{\threshold}{\theta_0}

\newcommand{\mathsep}{,~}
\newcommand{\st}{\ensuremath{~\middle|~}}
\newcommand{\card}[1]{\left\lvert{#1}\right\rvert}
\newcommand{\absv}[1]{\card{#1}}
\newcommand{\norm}[2]{\left\lVert{#1}\right\rVert_{#2}}

\newcommand{\set}[1]{\left\lbrace #1 \right\rbrace}

\newcommand{\setR}{\mathbb R}

\newcommand{\expect}{\mathbb E}

\newcommand{\bA}{{\bf A}}
\newcommand{\Na}{N_{a}}
\newcommand{\bC}{{\bf C}}
\newcommand{\Nc}{N_{c}}
\newcommand{\bD}{{\bf D}}
\newcommand{\Sign}{\textsc{Sign}}

\newcommand{\compcost}{c_c}

\newcommand{\rtcost}{c_r}
\newcommand{\pc}{p_c}
\newcommand{\budget}{\mathbf{b}}

\newcommand{\ncomps}{N_c}

\newcommand{\nrts}{N_r}

\usepackage{hyperref}

\usepackage[preprint]{icml2026}

\usepackage{amsmath}
\usepackage{amssymb}
\usepackage{mathtools}
\usepackage{amsthm}

\usepackage{thmtools}
\usepackage{thm-restate}

\usepackage[capitalize,noabbrev]{cleveref}

\theoremstyle{plain}
\newtheorem{theorem}{Theorem}[section]
\newtheorem{proposition}[theorem]{Proposition}
\newtheorem{lemma}[theorem]{Lemma}

\theoremstyle{definition}
\newtheorem{definition}[theorem]{Definition}
\newtheorem{remark}[theorem]{Remark}

\usepackage[textsize=tiny]{todonotes}

\icmltitlerunning{Scoring from Comparisons and Ratings}

\begin{document}

\twocolumn[
  \icmltitle{The Benefits of Diversity: \\ Combining Comparisons and Ratings for Efficient Scoring}




  \icmlsetsymbol{equal}{*}

  \begin{icmlauthorlist}
    \icmlauthor{Julien Fageot}{}
    \icmlauthor{Matthias Grossglauser}{epfl}
    \icmlauthor{Lê-Nguyên Hoang}{tou}
    \icmlauthor{Matteo Tacchi-Bénard}{cnrs,tou}
    \icmlauthor{Oscar Villemaud}{epfl,tou}
  \end{icmlauthorlist}

  \icmlaffiliation{epfl}{EPFL, Lausanne, Switzerland}
  \icmlaffiliation{tou}{Tournesol association, Switzerland}
  \icmlaffiliation{cnrs}{Univ. Grenoble Alpes, CNRS, Grenoble INP, GIPSA-lab, Grenoble, France}

  \icmlcorrespondingauthor{Julien Fageot}{julien.fageot@gmail.com}
  \icmlcorrespondingauthor{Oscar Villemaud}{oscar.villemaud@epfl.ch}

  \icmlkeywords{Machine Learning, ICML}

  \vskip 0.3in
]



\printAffiliationsAndNotice{}  

\begin{abstract}
  Should humans be asked to evaluate entities individually or comparatively?
This question has been the subject of long debates.
In this work, we show that, interestingly, 
combining both forms of preference elicitation 
can outperform the focus on a single kind.
More specifically, we introduce SCoRa 
(Scoring from Comparisons and Ratings), 
a unified probabilistic model that allows to learn from both signals.
We prove that the MAP estimator of SCoRa is well-behaved.
It verifies monotonicity and robustness guarantees.
We then empirically show that SCoRa recovers accurate scores, even under model mismatch.
Most interestingly, we identify a realistic setting 
where combining comparisons and ratings outperforms using either one alone, 
and when the accurate ordering of top entities is critical. 
Given the \emph{de facto} availability of signals of multiple forms,
SCoRa additionally offers a versatile foundation for preference learning.
 
\end{abstract}

\section{Introduction}
\label{sec:introduction}
In many applications, 
like content recommendation and language model alignment,
there are ethical and performance motivations 
to fine-tune algorithms based on user preferences.
Debates are ongoing on how such preferences should be elicited,
with two leading candidates,
namely, direct item ratings and comparison-based judgments.

The former is used, for example, 
by the Internet Movie Database (IMDb)
where the items are graded with scores between 1 and 10,
while social media often leverages likes or dislikes.
Such direct rating systems are arguably easier to use, 
thereby allowing a large collection of data.
However, this approach has also been argued to provide unreliable data.
In particular, top content may have ``saturated'' maximal scores, 
thereby preventing the systems from differentiating them.
This is especially an issue when identifying an accurate ranking of the top 1\% of items 
is of high importance, e.g., for a recommender system.

Conversely, 
comparisons have been argued to better fit humans' judgment process~\cite{festinger1954theory, shah2016estimation},
have been used in psychophysics~\cite{thurstone1927law}, 
climate studies~\cite{kristof2019user}, 
ethical judgments~\cite{awad2018moral,DBLP:journals/pacmhci/LeeKKKYCSNLPP19,hoang2021tournesol,hoang2022tournesol}, 
and reinforcement learning with human feedback~\cite{DBLP:conf/nips/ChristianoLBMLA17,DBLP:conf/icml/ZhuJJ23}. 
However, they require being mindful of two items at once, 
which may be cognitively demanding, 
especially for the evaluation of texts or videos.
But this friction can also be argued to invite more thoughtful judgments.
Typically, it may be effortless to rate a climate denialist content as harmful,
and that an IPCC expert's discourse is reliable,
but more challenging (and instructive) to determine 
which of two climate change explainers should be more often recommended,
or generated by language models.
Comparisons may then be more informative once elicited, despite being more costly to elicit.

\begin{table*}
\begin{center}
\begin{tabular}{|c|p{5.5cm}|p{5.5cm}|}
\hline
\textbf{Criterion} & \textbf{Comparisons} & \textbf{Ratings} \\
\hline
\textbf{Cognitive effort} & Requires evaluating two items at once, potentially cognitively demanding & Quick and simple to provide, often requires only a single click or a number \\
\hline
\textbf{Data richness} & Provides fine-grained relative preferences between items & Captures independent judgments, often with limited resolution \\
\hline
\textbf{User engagement} & Less frequent due to higher effort, may lead to fewer data points & Easier to scale and collect at large volumes \\
\hline
\textbf{Robustness} & Less prone to biases like rating inflation or saturation & Susceptible to inconsistent scales, context effects, and biases \\
\hline
\textbf{Model interpretability} & Naturally leads to relative rankings & Enables absolute scoring, useful for threshold-based decisions \\
\hline
\textbf{Use cases} & Better for distinguishing between top-tier items or fine-tuning rankings & Better for fast evaluations and broad coverage \\
\hline
\end{tabular}
\caption{A comparison between comparisons and ratings.}
\vspace{-20pt}
\end{center}
\end{table*}

So, \emph{should practitioners elicit comparisons or ratings?}
In this paper, we suggest a third possibility: \emph{query both}.

\paragraph{Related works.}
The question of which modality to use when collecting human feedback received much interest in recent years.
In particular, the question of whether to use ratings or comparisons (or rankings) data has been the topic of several works \citep{shah2016estimation, babski2023inferring, fernandes2023bridging}.
Evidently, the debate may not be fully well-posed,
given that there may be hybrid feedback, 
such as the one proposed by \citet{xu2023perceptual}.

Most of the published findings argued that comparisons are preferable empirically \citep{shah2016estimation, babski2023inferring, fernandes2023bridging}
 and theoretically \citep{villemaud2025ranking}.
To the best of our knowledge, only \citet{wang2019your} argue the opposite.
Yet since ratings remain the default way of collecting feedback in most applications, 
and given that they are much more abundant than comparisons,
discarding them altogether is arguably a missed opportunity.

A few prior works proposed algorithms to leverage both ratings and comparisons.
In particular, \citet{ye2014active}  and \citet{perez2019pairwise} 
introduced algorithms based on noisy ratings and Bradley-Terry comparisons.
Interestingly, our SCoRa family of models essentially generalizes their solutions, 
which approximately correspond to using a Gaussian root law for ratings (up to some rating discretization)
and a binary root law for comparisons (see the root law definition below).

\paragraph{Contributions.}
First, we propose a class of models, 
which we call \emph{Scoring from Comparisons and Ratings} (SCoRa),
that leverage both signals.
Our approach is comparison-centered, 
in the sense that the ratings are themselves interpreted 
as comparisons with a fictive entity.
It leverages recent advances in comparison-based learning~\cite{fageot2023generalized,fageot2025generalizing}.
Our class of models also allow to account for metadata represented as embeddings.

Second, we prove several of desirable properties of SCoRa models,
such as yielding a convex negative log-posterior,
guaranteeing monotonicity for ratings and comparisons,
and providing a Lipschitz resilience.
Additionally, we evaluate our model, 
and show its ability to recover reasonable scores despite model mismatch.

Third, and most interestingly, 
we identify a simple and realistic settings 
where, given a fixed data elicitation budget, 
querying both ratings and comparisons improves accuracy
over collecting ratings only, or gathering comparisons only.
Essentially, this setting arises 
when the correct scoring of top entities matters most
and when active learning is used to prioritize their comparisons.

\paragraph{Structure.}
The rest of the paper is organized as follows.
Section~\ref{sec:model} introduces the SCoRa model.
In Section~\ref{sec:property}, we obtain its theoretical properties.
Section~\ref{sec:evaluation} presents our empirical evaluations. 
In Section~\ref{sec:SCoRa_value}, we identify the regimes where combining ratings and comparisons is beneficial.
Finally, Section~\ref{sec:conclusion} concludes.
Proofs are postponed to the Appendix and experiment can be accessed in Section \ref{sec:evaluation}. 

\section{The SCoRa Model}
\label{sec:model}
In many applications, entities are evaluated through several types of observations, such as \emph{pairwise comparisons}, \emph{individual ratings}, and \emph{feature embeddings}. Each source provides only partial information, but together they reveal a richer picture of the underlying latent scores.
The SCoRa model (\textbf{S}cores \textbf{Co}mparisons and \textbf{Ra}tings) offers a unified probabilistic framework for combining these heterogeneous observations into a coherent set of interpretable scores. 

\subsection{Comparisons, Ratings, and Embeddings}
\label{sec:compassessscore}

We consider a collection of $A$ entities indexed by $[A] = \{1, \dots, A\}$.  
Each entity $a \in [A]$ is represented by a feature vector $x_a \in \mathbb R^D$, 
and we write $x = (x_a)_{a \in [A]} \in \mathbb R^{D \times A}$
for the feature embedding matrix.  
A parameter vector $\beta \in \mathbb R^D$ induces the latent scores
$\theta = (\theta_a)_{a \in [A]} = x^\top \beta \in \mathbb R^A$
through $\theta_a = x_a^\top \beta$.

To learn preferences, we assume that two datasets have been collected.
First is a dataset of $\Na$ ratings $(a_n, t_n)$,
where $a_n \in [A]$ is the entity being rated
and $t_n \in \setR$ is its ratings.
We denote $\bA \triangleq \set{(n, a_n, t_n) \st n \in [\Na]}$ 
the multiset of ratings.
Second is a dataset of $\Nc$ comparisons $(b_n, c_n, r_n)$,
where $b_n, c_n \in [A]$ are the distinct entities being compared
and $r_n \in \setR$ is the comparative judgment.
We denote $\bC \triangleq \set{(n, b_n, c_n, r_n) \st n \in [\Nc]}$ 
the multiset of comparisons.
When iterating over $\bA$ or $\bC$, unless some ambiguity arises,
we will drop the dependency on the index $n$.
Finally, let $\bD \triangleq \bA \cup \bC$ be 
the entire dataset of both comparisons and ratings.

\subsection{The SCoRa Scoring Model}
\label{sec:GBTdatamodel}

The \emph{SCoRa model} provides a probabilistic model 
of how the latent feature vector $\beta \in \mathbb R^D$
generates comparisons and ratings.
This model has an additional latent variable $\theta_0 \in \mathbb R$,
which serves as a shared baseline for ratings.
More specifically, SCoRa is defined as follows.

\begin{enumerate}
    \item \textbf{Conditional independence.} All observations are conditionally independent given $(\beta, \theta_0)$.

    \item \textbf{Embedding.} All alternatives $a \in [A]$ have an associated embedding (or feature) vector $x_a \in \mathbb{R}^D$, with $D$ the feature dimension. 
    
    \item \textbf{Comparisons.}  
When comparing entities $b$ and $c \in [A]$, 
the random comparative judgment $r$ is assumed to 
follow a generalized Bradley--Terry (GBT) model~\cite{fageot2023generalized,fageot2025generalizing}
with root law $f$ 
(with finite exponential moments),
given entity scores $\theta_b \triangleq x_b^\top \beta$
and $\theta_c \triangleq x_c^\top \beta$, i.e.
\begin{equation}
\label{eq:lawComp}
    p(r \mid \beta)
    \propto
    f (r)\,\exp\!\big(r\, x_{bc}^\top \beta\big),
\end{equation}
with $x_{bc} \triangleq x_b - x_c \in \setR^D$ the embedding difference.

    \item \textbf{Ratings.} For the ratings of an entity $a$, 
the outcome is modeled as a GBT comparison 
between the implicit score $\theta_a = x_a^\top \beta$ of $a$ 
and the latent threshold $\theta_0 \in \setR$, 
with root law $g$ (also with finite exponential moments):
\begin{equation}
\label{eq:lawAssess}
    p(t \mid \beta, \theta_0)
    \propto
    g(t)\,\exp\!\big(t(x_c^\top \beta - \theta_0)\big).
\end{equation}

\item \textbf{Prior.}
Independent Gaussian priors are placed on $\beta$ with covariance $\Sigma_\beta = \sigma_\beta^2 I$ and on $\theta_0$ 
with variances and $\sigma_0^2$.
\end{enumerate}  

\begin{definition}
\label{def:GBTassessmodel}
A probabilistic model satisfying Assumptions~1--4 is called a \emph{SCoRa model} 
with embedding matrix $x$, comparison and rating root-laws $f$ and $g$, 
and prior variances $\sigma_\beta^2$ and $\sigma_0^2$.
\end{definition}

\paragraph{Threshold interpretation.}
The threshold $\theta_0$ provides a global reference for ratings, aligning them to the same latent scale as comparisons.  
It acts as the neutral level at which entities are judged favorably ($t>0$) or unfavorably ($t<0$).
Crucially, it is learned rather than fixed.

\paragraph{Embedding and one-hot encoding.}
The embedding matrix \( x \in \mathbb{R}^{D \times A} \) connects alternatives to feature domains. The simplest case, \( x = I_A \) (where \( D = A \)), corresponds to a trivial embedding that merely enumerates the alternatives. A more structured approach is the \emph{one-hot encoding}, where each alternative \( x_a \) is represented by a binary vector with a single \( 1 \) indicating its class membership (among \( D \) possible classes). Without loss of generality, up to a permutation of columns, the embedding matrix takes the block-diagonal form:

\[
x =
\begin{bmatrix}
\mathbf{1}_{n_1}^\top & \mathbf{0}_{n_2}^\top & \cdots & \mathbf{0}_{n_D}^\top \\
\mathbf{0}_{n_1}^\top & \mathbf{1}_{n_2}^\top & \cdots & \mathbf{0}_{n_D}^\top \\
\vdots & \vdots & \ddots & \vdots \\
\mathbf{0}_{n_1}^\top & \mathbf{0}_{n_2}^\top & \cdots & \mathbf{1}_{n_D}^\top
\end{bmatrix},
\]

where \( \mathbf{1}_n \) denotes an \( n \)-dimensional vector of ones, \( n_d \) is the number of alternatives in class \( d \), and \( A = \sum_{d=1}^D n_d \) is the total number of alternatives. This one-hot encoding structure will be further analyzed in Section~\ref{subsec:acase}.

\subsection{MAP estimation of SCoRa}
\label{sec:MAPesti}

We now describe Maximum A Posteriori (MAP) estimator 
of the feature vector $\beta \in \mathbb R^D$ 
and the rating threshold $\theta_0 \in \mathbb R$ in the SCoRa model, 
given the dataset $\mathbf D_N$, the embedding matrix $x$, and prior variances $\sigma_\beta^2$ and $\sigma_0^2$.
Following~\cite{fageot2025generalizing}, this is best done by
introducing the cumulant generating function
\begin{equation}
\Phi_f(\theta)
=
\log \int_{\mathbb R} \exp(\theta r)\, dF(r),
\end{equation}
for a root law $f$ with cumulative distribution function $F$.  

The MAP is obtained by maximizing the posterior,
which can equivalently be done by minimizing the loss function defined as
the \emph{negative log-posterior}.
This loss is given by (up to an additive constant)
\begin{equation}
\label{eq:Lx}
    \mathcal{L}(\beta, \theta_0 \mid \mathbf{D})
    = -\log p(\beta, \theta_0 \mid \mathbf{D}) + cst.
\end{equation}

\begin{proposition}
The loss admits the explicit form
\begin{align}\label{eq:Lxcompute}
    &\mathcal{L}(\beta, \theta_0 \mid \mathbf{D}) 
    = \frac{\norm{\beta}{2}^2}{2 \sigma_\beta^2} 
        + \frac{\theta_0^2}{2\sigma_0^2} \notag \\
    &+ \sum_{(b, c, r) \in \bC} \left[
        \Phi_{bc}(x_{bc}^\top \beta) - r\, x_{bc}^\top \beta 
    \right] \notag \\
    &+ \sum_{(a, t) \in \bA} \left[ 
        \Phi_{a} (x_{c}^\top \beta - \theta_0)
          - t (x_{c}^\top \beta - \theta_0) 
    \right].
\end{align}
Moreover, $\mathcal{L}$ is $\sigma_{\max}^{-2}$-strongly convex, 
where $\sigma_{\max}^2 = \max \set{\sigma_\beta^2, \sigma_0^2}$. 
In particular, there exists a unique MAP.
\end{proposition}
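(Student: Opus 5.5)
The plan is to derive the explicit form from Bayes' rule and conditional independence, then obtain strong convexity by splitting the loss into a strongly convex quadratic prior part plus a convex data part. Throughout, I read $\Phi_{bc}$ as $\Phi_f$ and $\Phi_a$ as $\Phi_g$, and the $x_c$ appearing in the rating terms as $x_a$.

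\textbf{Step 1: explicit form.} By Bayes' rule,
\[
p(\beta,\theta_0\mid\bD)\propto p(\beta)\,p(\theta_0)\prod_{\bC}p(r\mid\beta)\prod_{\bA}p(t\mid\beta,\theta_0).
\]
Conditional independence (Assumption~1) justifies the product. The Gaussian priors contribute $\norm{\beta}{2}^2/(2\sigma_\beta^2)+\theta_0^2/(2\sigma_0^2)$ to the negative logarithm, up to constants.

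For a comparison, the normalizing constant of \eqref{eq:lawComp} is $\int f(r)e^{r x_{bc}^\top\beta}\,dr=\exp\Phi_f(x_{bc}^\top\beta)$. Hence
\[
-\log p(r\mid\beta)=\Phi_f(x_{bc}^\top\beta)-r\,x_{bc}^\top\beta-\log f(r).
\]
The term $-\log f(r)$ does not depend on $(\beta,\theta_0)$, so it is absorbed into the constant. The same computation applied to \eqref{eq:lawAssess} yields the rating terms with $\Phi_g(x_a^\top\beta-\theta_0)$. The finite exponential moments assumption is exactly what makes these normalizers finite for every argument.

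\textbf{Step 2: convexity of the data terms.} A cumulant generating function is convex. One route is Hölder's inequality applied to $\int e^{(\lambda\theta+(1-\lambda)\theta')r}\,dF$; another is to note that $\Phi''$ is a variance, hence $\geq 0$, after justifying differentiation under the integral via the exponential moments.

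Each data term has the form $\Phi(\ell(\beta,\theta_0))-s\,\ell(\beta,\theta_0)$ with $\ell$ linear. Here $\ell=x_{bc}^\top\beta$ for comparisons and $\ell=x_a^\top\beta-\theta_0$ for ratings. A convex function composed with a linear map is convex, and subtracting a linear term preserves convexity. Sums of convex functions are convex, so the whole data part is convex.

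\textbf{Step 3: strong convexity and the MAP.} The prior part is $\tfrac12 z^\top Q z$ with $z=(\beta,\theta_0)$ and $Q=\mathrm{diag}(\sigma_\beta^{-2}I,\sigma_0^{-2})\succeq\sigma_{\max}^{-2}I$. It is therefore $\sigma_{\max}^{-2}$-strongly convex, and adding a convex function preserves this. Hence $\mathcal L-\tfrac{1}{2\sigma_{\max}^2}\norm{z}{2}^2$ is convex, which is the claimed strong convexity.

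Existence and uniqueness of the minimizer follow from standard arguments. $\mathcal L$ is continuous, being finite-valued and convex. Strong convexity gives coercivity: $\mathcal L(z)\ge \mathcal L(0)+\langle g,z\rangle+\tfrac{1}{2\sigma_{\max}^2}\norm{z}{2}^2$ for a subgradient $g$ at $0$. A continuous coercive function attains its minimum, and strict convexity makes the minimizer unique.

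\textbf{Main obstacle.} The delicate step is the regularity of $\Phi$. I need it to be finite everywhere, which follows from the exponential moment assumption (interpreted as finite moments for all $\theta$). I also need convexity without assuming differentiability, which is why I would rely on the Hölder argument rather than on second derivatives.
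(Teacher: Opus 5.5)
Your proposal is correct and follows the same route as the paper. It uses Bayes' rule with conditional independence to obtain the prior-plus-likelihood form, then convexity of the cumulant generating functions composed with linear maps, and finally strong convexity inherited from the Gaussian prior terms (whose Hessian dominates $\sigma_{\max}^{-2} I$), which yields a unique MAP. You simply supply details the paper leaves to citations: the explicit normalizer $e^{\Phi_f}$, the H\"older argument for convexity, and the coercivity argument for existence. Your reading of $\Phi_{bc}$, $\Phi_a$, $x_c$ as $\Phi_f$, $\Phi_g$, $x_a$ correctly fixes the typos in the statement.
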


\begin{proof}
We apply Bayes rule, while ignoring the denominator (which is constant in the latent variables).
The prior is then turned in the first two terms,
while the two other terms correspond to the likelihood of the observed data.
The derivation follows the same footsteps as~\cite{fageot2025generalizing}.
It is known that the cumulant generating functions are convex~\cite{dembo2009large}.
The strong convexity follows from that of the prior terms.
Strong convexity then implies the existence and uniqueness of the MAP.
\end{proof}

\begin{definition}
\label{def:MAP}
The \emph{MAP estimator} of the SCoRa model is
\begin{equation}
    (\beta^*, \theta_0^*)
    = 
    \underset{(\beta, \theta_0) \in \mathbb{R}^D \times \mathbb{R}}{\arg\min}
    \mathcal{L}(\beta, \theta_0 \mid \mathbf{D}).
\end{equation}
The corresponding score estimate is $\theta^* = x^\top \beta^*$.
\end{definition}

\subsection{Flexible Generalized Bradley-Terry Model}
\label{sec:flexible}
To facilitate the proof of the properties of SCoRa,
we introduce an even more general class of preference learning models,
which we call the flexible GBT models.
This class extends the classical GBT frameworks \cite{fageot2023generalized, fageot2025generalizing} in two directions:  
(i) each comparison may be governed by its own root law, and  
(ii) the prior on $\beta$ may be any Gaussian distribution.
Moreover, the rating threshold $\theta_0$ from the SCoRa model 
is absorbed into the embedding, 
so that all terms share a homogeneous structure.

\begin{definition}[Flexible linear GBT model]
\label{def:flexibleGBT}
A \emph{flexible linear GBT model}
is a comparison-only model with latent vector $\beta \in \setR^D$,
where each comparison $(n, a_n, b_n)$ is also labeled with a root law $f_n$.
All observed data are conditionally independent on $\beta$,
and the comparison $r_n$ is assumed to follow a GBT distribution of root law $f_n$, i.e.
\begin{equation}
    p(r_n | \beta) \propto f_n (r_n) \exp (r_n x_{a_n b_n}^\top \beta).
\end{equation}
\end{definition}

Assuming that we have a Gaussian prior $\mathcal N(0, \Sigma_\beta)$ on $\beta$, 
the negative log-posterior given a multiset 
$\bD \triangleq \set{(n, a_n, b_n, r_n, f_n) \st n \in N}$ is then given by
\begin{align*}
    \mathcal L(\beta | \bD)
    = \frac{\beta^\top \Sigma_\beta^{-1} \beta}{2}
        + \sum_{(a, b, r, f) \in \bD} \left[ 
            \Phi_f( x_{ab}^\top \beta ) - r x_{ab}^\top \beta
        \right].
\end{align*}

We show that flexible linear GBT generalizes SCoRa models.

\begin{theorem}
\label{theo:SACEisFlex}
Any SCoRa model can be represented as a flexible GBT model after an appropriate augmentation of the embedding space.
\end{theorem}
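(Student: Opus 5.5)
We augment the embedding with one extra coordinate carrying the threshold, so that ratings become ordinary comparisons against a fictive entity. The aim is to show that the negative log-posterior of the SCoRa model, given in the explicit form of the proposition above, coincides with that of a suitably built flexible linear GBT model.

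\textbf{Augmented parameter and embeddings.} Set $\tilde\beta \triangleq (\beta, \theta_0) \in \setR^{D+1}$. Add a fictive entity, indexed $0$, so the entity set becomes $\{0\} \cup [A]$. Give each real entity $a \in [A]$ the augmented embedding $\tilde x_a \triangleq (x_a, 0)$, and the fictive entity the embedding $\tilde x_0 \triangleq (0, 1)$. Then $\tilde x_a^\top \tilde\beta = x_a^\top\beta = \theta_a$ and $\tilde x_0^\top\tilde\beta = \theta_0$. Consequently $\tilde x_{bc} = (x_{bc}, 0)$, and for a rating $\tilde x_{a0} = (x_a, -1)$, so that $\tilde x_{a0}^\top \tilde\beta = x_a^\top\beta - \theta_0$.

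\textbf{Augmented dataset.} Each comparison $(n, b_n, c_n, r_n) \in \bC$ becomes the flexible comparison $(b_n, c_n, r_n, f)$, with root law $f$. Each rating $(n, a_n, t_n) \in \bA$ becomes the flexible comparison $(a_n, 0, t_n, g)$, with root law $g$. The prior is $\mathcal N(0, \Sigma)$ with block-diagonal covariance $\Sigma = \mathrm{diag}(\sigma_\beta^2 I_D, \sigma_0^2)$, which is allowed since the flexible model admits any Gaussian prior.

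\textbf{Matching the likelihoods and losses.} By construction, the flexible likelihood $f(r)\exp(r\,\tilde x_{bc}^\top\tilde\beta)$ is exactly \eqref{eq:lawComp}, and $g(t)\exp(t\,\tilde x_{a0}^\top\tilde\beta)$ is exactly \eqref{eq:lawAssess}. Conditional independence transfers directly, and the normalizing constants agree because they depend only on the root law and the scalar argument: they are $\exp\Phi_f$ and $\exp\Phi_g$, evaluated at the same inner product in both models. Substituting into the flexible loss gives $\tilde\beta^\top\Sigma^{-1}\tilde\beta/2 = \norm{\beta}{2}^2/(2\sigma_\beta^2) + \theta_0^2/(2\sigma_0^2)$. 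The remaining sums reproduce the comparison and rating terms of \eqref{eq:Lxcompute} term by term, reading $\Phi_{bc} = \Phi_f$ and $\Phi_a = \Phi_g$, and reading $x_c$ in the rating term as $x_a$. Hence the two posteriors coincide as functions of $(\beta,\theta_0)$, and so do their MAPs; the SCoRa scores are recovered as $\theta^* = x^\top\beta^*$ from the first $D$ coordinates.

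\textbf{Main obstacle.} The argument is essentially bookkeeping, so the main risk is notational rather than mathematical. The fictive entity lies outside $[A]$, so the flexible model must allow an extended entity set; equivalently, we can phrase each flexible datum directly by its difference vector $\tilde x_{ab}$. We must also check that sign conventions are consistent, namely that the difference $x_a - \tilde x_0$ carries $-\theta_0$ and not $+\theta_0$.
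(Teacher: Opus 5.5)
Your proposal is correct and follows the paper's proof essentially step for step. The paper also sets $\tilde\beta = (\beta, \theta_0)$ with prior covariance $\mathrm{Diag}(\sigma_\beta^2, \ldots, \sigma_\beta^2, \sigma_0^2)$, adds a fictive entity (indexed $A+1$ rather than $0$) with embedding $(0_D, 1)$, maps each rating to a comparison against that entity with root law $g$, and checks term by term that the two negative log-posteriors coincide; your sign check $\tilde x_{a0}^\top \tilde\beta = x_a^\top\beta - \theta_0$ and your reading of $x_c$ as $x_a$ in the rating term are exactly what that computation needs.
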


\begin{proof}
    See Appendix~\ref{app:SCoRa=flex}.
\end{proof}

Thus SCoRa is fully contained in the flexible GBT framework.
This is particularly useful for theoretical analysis.
Moreover, the gained flexibility allows further combinations of various signals.
Typically, one could now combine multiple forms of comparisons,
such as binary judgments (like in Bradley-Terry) 
and more quantified judgments.

\begin{remark}
The \emph{flexible SCoRa model} generalizes the standard SCoRa framework by allowing distinct root laws $f_{bc}$ for comparisons and $g_a$ for ratings, as well as general covariance structure. As in Theorem~\ref{theo:SACEisFlex}, any such model can be recast as a flexible GBT model via embedding augmentation, preserving the theoretical guarantees of this paper (e.g., monotonicity and resilience) under appropriate adaptations.
\end{remark}

In the sequel and for ease of reading, we present results in the native SCoRa notation, where the role of the single threshold parameter~$\theta_0$ remains explicit.

\section{Monotonicity and Resilience of SCoRa}
\label{sec:property}
This section presents the main theoretical properties of the SCoRa model, 
with detailed proofs deferred to Appendix~\ref{app:theo}.  
Relying on Theorem~\ref{theo:SACEisFlex}, 
which embeds SCoRa into the flexible GBT framework, 
we derive several structural results describing its behavior 
under both comparisons and ratings.  
In particular, we study the model’s \emph{monotonicity} 
with respect to latent scores and its \emph{resilience} to noisy or unbalanced data.

\subsection{Basic Properties}
\label{subsec:basics}

We begin with the simple with no embedding, 
which amounts to assuming $A = D$ and $x = I_A$ (the identity matrix).  
In this case, the MAP estimate of the threshold admits a simple closed-form expression.

\begin{proposition}
\label{prop:MAPthreshold}
In the SCoRa model without embeddings,
\begin{equation}
\theta_0^* = - \frac{\sigma_0^2}{\sigma_\beta^2} \sum_{a \in [A]} \theta_a^*.
\end{equation}
where $(\theta_a^*,\theta_0^*)$ denotes the MAP estimator of Definition~\ref{def:MAP}.
\end{proposition}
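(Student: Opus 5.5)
We argue from the first-order optimality conditions of the strongly convex loss $\mathcal{L}$ in \eqref{eq:Lxcompute}, specialized to $x = I_A$. Then $\beta = \theta$ and $x_{bc}^\top\beta = \theta_b - \theta_c$. We also read the rating term as $\Phi_g(\theta_a - \theta_0) - t(\theta_a - \theta_0)$, with the evident $x_a$ in place of the typo $x_c$. By the proposition above, $\mathcal{L}$ is strongly convex, and it is differentiable since cumulant generating functions of laws with finite exponential moments are smooth. Hence the MAP $(\theta^*, \theta_0^*)$ is characterized by $\nabla \mathcal{L} = 0$.

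The key observation is a translation symmetry of the data terms. Every comparison term depends only on differences $\theta_b - \theta_c$, and every rating term depends only on $\theta_a - \theta_0$. So the data part of the loss is invariant under $(\theta, \theta_0) \mapsto (\theta + s\mathbf{1}, \theta_0 + s)$ for any $s \in \setR$. We therefore differentiate along the direction $v = (\mathbf{1}_A, 1)$. The directional derivative of the data terms vanishes identically. Concretely, for a comparison term the contributions $\phi'(\cdot)$ to $\partial_{\theta_b}$ and $-\phi'(\cdot)$ to $\partial_{\theta_c}$ cancel. For a rating term the contribution $\psi'$ to $\partial_{\theta_a}$ cancels $-\psi'$ to $\partial_{\theta_0}$.

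Summing the optimality conditions $\partial_{\theta_a}\mathcal{L} = 0$ over all $a \in [A]$ and adding $\partial_{\theta_0}\mathcal{L} = 0$ thus leaves only the prior contributions:
\begin{equation*}
\sum_{a \in [A]} \frac{\theta_a^*}{\sigma_\beta^2} + \frac{\theta_0^*}{\sigma_0^2} = 0.
\end{equation*}
Rearranging gives $\theta_0^* = -\frac{\sigma_0^2}{\sigma_\beta^2}\sum_a \theta_a^*$, which is the claim.

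No step is really an obstacle. The only care needed is bookkeeping: check that each data term's partial derivatives cancel exactly along $v$, including when comparisons repeat or an entity is rated several times. This holds termwise, so multiplicities are irrelevant.
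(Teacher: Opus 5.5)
Your argument is correct and is essentially the paper's. The paper proves a flexible-GBT lemma (with $x = I$): the data terms are invariant under $\beta \mapsto \beta + t\mathbf{1}$, so $h(t) = \mathcal{L}(\beta^* + t\mathbf{1}) - \mathcal{L}(\beta^*)$ reduces to the prior and $h'(0) = \mathbf{1}^\top \Sigma_\beta^{-1}\beta^* = 0$. It then applies this to the augmented embedding $\tilde{x} = I_{A+1}$ with $\tilde{\beta} = (\beta, \theta_0)$, i.e.\ exactly along your direction $v = (\mathbf{1}_A, 1)$. The only cosmetic difference is that you stay in SCoRa coordinates and invoke the full condition $\nabla\mathcal{L} = 0$, which needs smoothness of $\Phi_f, \Phi_g$, whereas the paper's one-dimensional restriction only differentiates the quadratic prior.
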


\begin{proof}
    Likelihood terms only feature score differences,
    and are thus invariant under the same translation for all scores.
    Minimizing the quadratic negative log-prior 
    over this translation yields the proposition.
    See Appendix~\ref{proof:propmean}.
\end{proof}

In the general embedded case, $\theta_0^*$ interacts with the coordinates of $\beta^*$ via the linear map $\theta = x^\top \beta$, so no simple closed-form expression is available.

\begin{proposition}[First two conditional moments]
\label{prop:moments_embedding}
For comparisons and ratings, conditioned on $\beta$ (and on $\theta_0$ for ratings),
\begin{align}
    \mathbb{E}[r_n \mid \beta] &= \Phi_{b_n c_n}'(x_{b_n c_n}^\top \beta),
    \\
    \mathbb{V}[r_n \mid \beta] &= \Phi_{b_n c_n}''(x_{b_n c_n}^\top \beta),
    \label{eq:momentr_embed} \\
    \mathbb{E}[t_m \mid \beta,\theta_0] &= \Phi_{a_m}'(x_{a_m}^\top \beta - \theta_0), \\
    \mathbb{V}[t_m \mid \beta,\theta_0] &= \Phi_{a_m}''(x_{a_m}^\top \beta - \theta_0).
    \label{eq:momentt_embed}
\end{align}
\end{proposition}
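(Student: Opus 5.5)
The plan is to treat each observation as a one-parameter exponential family with natural parameter equal to the relevant score difference, and to read off the moments from derivatives of the log-partition function. Fix a comparison $(b_n,c_n,r_n)$ and set $s \triangleq x_{b_nc_n}^\top\beta$. By \eqref{eq:lawComp}, the conditional law of $r_n$ given $\beta$ has density $f(r)e^{rs}$ with respect to the root law, up to normalisation. I would first identify the normalising constant explicitly as
\[
Z(s)=\int_{\setR} e^{sr}\,dF(r)=e^{\Phi_{b_nc_n}(s)},
\]
so that $p(r_n\mid\beta)=f(r_n)\exp\big(r_n s-\Phi_{b_nc_n}(s)\big)$. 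This is the same identification that produces the terms $\Phi_{bc}(x_{bc}^\top\beta)$ in the loss \eqref{eq:Lxcompute}.

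Next I would differentiate $Z$ under the integral sign. Doing so gives $Z'(s)=\int r e^{sr}\,dF(r)$ and $Z''(s)=\int r^2e^{sr}\,dF(r)$. It follows that
\[
\Phi'(s)=\frac{Z'(s)}{Z(s)}=\expect[r_n\mid\beta],
\qquad
\Phi''(s)=\frac{Z''(s)}{Z(s)}-\Big(\frac{Z'(s)}{Z(s)}\Big)^2=\expect[r_n^2\mid\beta]-\expect[r_n\mid\beta]^2,
\]
which is exactly the pair of claims in \eqref{eq:momentr_embed}.

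For ratings, the argument is identical with the root law $g$ and natural parameter $u \triangleq x_{a_m}^\top\beta-\theta_0$, using \eqref{eq:lawAssess}. I read the paper's $x_c$ in \eqref{eq:lawAssess} as $x_{a_m}$. Alternatively, one can invoke Theorem~\ref{theo:SACEisFlex}: after augmentation, a rating is a flexible GBT comparison with feature difference $(x_{a_m},-1)$ acting on $(\beta,\theta_0)$, so the comparison computation applies verbatim and yields \eqref{eq:momentt_embed}.

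The main obstacle is justifying differentiation under the integral twice. The finite-exponential-moment assumption should be read as $Z(s)<\infty$ for all $s\in\setR$. Then, for $|s'-s|\le\delta$, the integrands $|r|^k e^{s'r}$ with $k=1,2$ are dominated by $C_\delta\big(e^{(s+2\delta)r}+e^{(s-2\delta)r}\big)$, using $|r|^k\le C_\delta'(e^{\delta r}+e^{-\delta r})$. This dominating function is integrable, so dominated convergence gives smoothness of $Z$ and the formulas above, and it also shows that the moments are finite.
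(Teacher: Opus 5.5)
Your proposal is correct and takes essentially the paper's route. The paper writes the conditional CGF of $r$ as $\Phi_f(\lambda + x_{bc}^\top\beta) - \Phi_f(x_{bc}^\top\beta)$ and reads the mean and variance off its derivatives at $\lambda = 0$; this is the same log-partition identity you derive through $Z'/Z$ and $Z''/Z - (Z'/Z)^2$, and the paper handles ratings through the reduction of Theorem~\ref{theo:SACEisFlex}, as in your alternative. Your dominated-convergence justification for differentiating under the integral, and your reading of $x_c$ as $x_{a_m}$ in \eqref{eq:lawAssess}, add rigor that the paper leaves implicit.
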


\begin{proof}
    See Appendix~\ref{app:moments_flexGBT}.
\end{proof}

Thus, the embedding $x$ maps $\beta$ into latent scores, 
which are transformed by $\Phi'$ and $\Phi''$ 
into conditional expectations and variances.  
This recovers the GBT moment structure while incorporating ratings.  

\subsection{Monotonicity and Update Properties}

We now formalize monotonicity and the effect of adding new data.
Our definitions are adapted from~\cite{fageot2023generalized, DBLP:journals/corr/abs-2506-08998}.

\begin{definition}[Pairwise monotonicity]
A preference learning model $\bD \mapsto \theta^*(\bD)$ is pairwise monotone 
if, for all datasets $\bD$ and $\bD'$, both of the following hold:
\begin{itemize}
  \item if $\bD$ differs from $\bD'$ only on the $n$-th comparison, 
  with $a_n = a_n'$, $b_n = b_n'$ and $r_n \ge r_n'$, 
  then $\theta_{b_n c_n}^*(\bD) \ge \theta_{b_n c_n}^*(\bD')$.
  \item if $\bD$ differs from $\bD'$ only on the $n$-th rating, 
  with $a_n = a_n'$ and $t_n \ge t_n'$, 
  then $\theta_{a_n 0}^*(\bD) \ge \theta_{a_n 0}^*(\bD')$.
\end{itemize}
\end{definition}

\begin{proposition}[Pairwise monotonicity]
\label{prop:monotonicity_SCoRa_embed}
The SCoRa MAP estimator is pairwise monotone.
\end{proposition}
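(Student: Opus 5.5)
We reduce to the flexible linear GBT framework via Theorem~\ref{theo:SACEisFlex} and prove monotonicity there. In the augmented embedding $\tilde\beta = (\beta,\theta_0)\in\setR^{D+1}$, a comparison $(b,c,r)$ has difference vector $z=(x_{bc},0)$, and a rating $(a,t)$ has $z=(x_a,-1)$, i.e.\ a comparison between $a$ and the fictive entity $0$ with embedding $(0,1)$. In both cases the quantity to control, $\theta^*_{b_nc_n}$ or $\theta^*_{a_n0}$, equals $z_n^\top\tilde\beta^*$. It therefore suffices to prove the following: in a flexible linear GBT model with Gaussian prior, if $\bD$ and $\bD'$ differ only in the $n$-th observation value, with $r_n\ge r_n'$, then $z_n^\top\tilde\beta^*(\bD)\ge z_n^\top\tilde\beta^*(\bD')$.

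For this core claim, write $\mathcal L(\tilde\beta\mid\bD)=\mathcal L_0(\tilde\beta)-r_n z_n^\top\tilde\beta$, where $\mathcal L_0$ collects the prior and all terms not depending on $r_n$, including $\Phi_{f_n}(z_n^\top\tilde\beta)$. Then $\mathcal L(\cdot\mid\bD')=\mathcal L_0-r_n' z_n^\top\tilde\beta$. By the strong convexity established in the Proposition of Section~\ref{sec:MAPesti}, both losses have unique minimizers $\tilde\beta^*$ and $\tilde\beta'^*$. We then use the standard optimality-swap argument: $\mathcal L(\tilde\beta^*\mid\bD)\le\mathcal L(\tilde\beta'^*\mid\bD)$ and $\mathcal L(\tilde\beta'^*\mid\bD')\le\mathcal L(\tilde\beta^*\mid\bD')$. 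Adding these two inequalities cancels the $\mathcal L_0$ terms and leaves
\[
(r_n-r_n')\,z_n^\top(\tilde\beta^*-\tilde\beta'^*)\ge 0 .
\]
If $r_n>r_n'$, this gives $z_n^\top\tilde\beta^*\ge z_n^\top\tilde\beta'^*$. If $r_n=r_n'$, the two datasets coincide and the claim is trivial. The same argument handles the rating case with $t_n$ in place of $r_n$.

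An alternative is the first-order conditions $\nabla\mathcal L_0(\tilde\beta^*)=r_nz_n$ and $\nabla\mathcal L_0(\tilde\beta'^*)=r_n'z_n$, combined with monotonicity of the gradient of the convex function $\mathcal L_0$. This yields the same inequality, but the swap argument avoids any differentiability issue.

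The main point needing care is the reduction, not the inequality itself. We must check that the embedding augmentation of Theorem~\ref{theo:SACEisFlex} maps $\theta_{a_n0}$ exactly to $x_{a_n}^\top\beta-\theta_0$, so the threshold is treated as the fictive entity's score and the sign convention matches the rating law~\eqref{eq:lawAssess}. We must also check that the prior becomes a block-diagonal Gaussian $\mathcal N(0,\mathrm{diag}(\sigma_\beta^2 I,\sigma_0^2))$, so the flexible model's hypotheses hold. Once this bookkeeping is verified, the monotonicity claim follows for both bullet points of the definition.
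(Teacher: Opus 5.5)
Your proof is correct, but its core step differs from the paper's. The reduction is the same: the paper also passes to flexible GBT and transfers back via $\tilde x_a^\top\tilde\beta^*=\theta^*_a$ and $\tilde x_{A+1}^\top\tilde\beta^*=\theta^*_0$. The difference is in how the flexible-GBT monotonicity lemma is proved.

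The paper uses a differential argument. It interpolates the $n$-th value $r$ between $r_n'$ and $r_n$ and writes the first-order condition $\nabla\mathcal L(\beta^*(r)\mid\bD)+(r_n-r)x_{ab}=0$. It then differentiates in $r$, using invertibility of the Hessian from strong convexity, to get $\frac{d\beta^*}{dr}=\nabla^2\mathcal L^{-1}x_{ab}$. It concludes from $\frac{d\theta^*_{ab}}{dr}=x_{ab}^\top\nabla^2\mathcal L^{-1}x_{ab}\ge 0$.

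Your optimality swap compares the two endpoints directly, since summing the two minimality inequalities cancels the common part $\mathcal L_0$. It needs only that the minimizers exist and that $\mathcal L_0$ is finite, which holds because the root laws have finite exponential moments. It avoids twice-differentiability of the cumulant generating functions, Hessian inversion, and any implicit-function argument for smooth dependence of $\beta^*$ on $r$. It would therefore carry over unchanged to non-smooth convex losses or other regularizers.

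What the paper's route buys is quantitative: an explicit sensitivity $x_{ab}^\top\nabla^2\mathcal L^{-1}x_{ab}$, which also gives strict monotonicity whenever $x_{ab}\neq 0$. You can recover a quantitative version too. Use $\sigma_{\max}^{-2}$-strong convexity in your two inequalities instead of plain minimality, and the conclusion upgrades to $(r_n-r_n')\,z_n^\top(\tilde\beta^*-\tilde\beta'^*)\ge\sigma_{\max}^{-2}\|\tilde\beta^*-\tilde\beta'^*\|_2^2$.

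Finally, the reduction you flag as the delicate point is not actually needed for your argument. The SCoRa loss already has the form $\mathcal L_0-t_n(x_{a_n}^\top\beta-\theta_0)$ for a rating and $\mathcal L_0-r_n x_{b_nc_n}^\top\beta$ for a comparison. The swap can therefore be run directly in the native $(\beta,\theta_0)$ variables.
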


\begin{proof}
    The proof leverages the flexible GBT formulation.
    See Appendix~\ref{app:pairwise_monotone}.
\end{proof}

Let $\Sign$ be the sign function with $\Sign(0)=0$.

\begin{proposition}[Directional effect of a new comparison]
\label{prop:direction_comparison}
Consider any dataset $\bD$, 
and denote $\bD'$ the dataset obtained by adding the comparison $(b, c, r)$ to $\bD$.
Then
\begin{align*}
\Sign \left( \theta_{bc}^*(\bD') - \theta_{bc}^*(\bD) \right)
=
\Sign \left( r - \Phi_f' \left(\theta_{bc}^*(\bD)\right) \right).
\end{align*}
Similarly, denote $\bD'$ the dataset by adding rating $(a, t)$ to $\bD$. 
Then
\begin{align*}
\Sign \left( \theta_{a0}^*(\bD') - \theta_{a0}^*(\bD) \right)
=
\Sign \left( t - \Phi_g' \left(\theta_{a0}^*(\bD)\right) \right).
\end{align*}
\end{proposition}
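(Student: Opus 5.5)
Via Theorem~\ref{theo:SACEisFlex}, we work in the flexible linear GBT formulation, where the threshold $\theta_0$ is absorbed into an augmented latent vector, still denoted $\beta$. A new rating $(a,t)$ then becomes a comparison with root law $g$ along the augmented direction $z \triangleq x_{a0}$, for which $z^\top\beta = \theta_{a0}$. The two claims are thus the same statement, and we prove it once for a generic added term $\Phi(z^\top\beta) - r z^\top\beta$. Here $z$ is the (augmented) embedding difference, $\Phi$ the corresponding cumulant generating function, and $\Phi$ is convex.

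Write $\mathcal L$ for the loss on $\bD$ and $\mathcal L' = \mathcal L + \Phi(z^\top\beta) - r z^\top\beta$ for the loss on $\bD'$. Let $\beta^*$ and $\beta'$ denote their unique minimizers; these exist and are unique by strong convexity. The first-order conditions read
\begin{equation*}
\nabla\mathcal L(\beta^*) = 0,
\qquad
\nabla\mathcal L(\beta') + \big(\Phi'(z^\top\beta') - r\big) z = 0 .
\end{equation*}
Take the inner product of the difference $\nabla\mathcal L(\beta') - \nabla\mathcal L(\beta^*)$ with $\beta'-\beta^*$. Strong convexity gives
\[
\big(r - \Phi'(z^\top\beta')\big)\, z^\top(\beta'-\beta^*) \;=\; \langle \nabla\mathcal L(\beta')-\nabla\mathcal L(\beta^*), \beta'-\beta^*\rangle \;\ge\; \sigma_{\max}^{-2}\norm{\beta'-\beta^*}{2}^2 \ge 0 .
\]
Set $\delta \triangleq z^\top(\beta'-\beta^*) = \theta^*_{bc}(\bD')-\theta^*_{bc}(\bD)$. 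Then $\big(r-\Phi'(z^\top\beta^* + \delta)\big)\delta \ge 0$, with strict inequality whenever $\beta'\neq\beta^*$.

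The main obstacle is that this inequality involves $\Phi'$ evaluated at the new score rather than the old one. We resolve it by a case analysis using monotonicity of $\Phi'$, which follows from convexity of $\Phi$.
\begin{itemize}
\item Suppose $r = \Phi'(z^\top\beta^*)$. Then $\beta^*$ satisfies the first-order condition of $\mathcal L'$, so by uniqueness $\beta'=\beta^*$ and $\delta=0$.
\item Suppose $r > \Phi'(z^\top\beta^*)$. Assume first that $\delta<0$. Then $\Phi'(z^\top\beta^*+\delta)\le\Phi'(z^\top\beta^*)<r$, so $(r-\Phi'(\cdot))\delta<0$, a contradiction. Assume next that $\delta=0$. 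Then the first-order conditions give $\nabla\mathcal L(\beta') = (r-\Phi'(z^\top\beta^*))z$, and the displayed identity yields $0 \ge \sigma_{\max}^{-2}\norm{\beta'-\beta^*}{2}^2$, hence $\beta'=\beta^*$. But then $(r-\Phi'(z^\top\beta^*))z = 0$ with $z\neq 0$, a contradiction. Here $z\neq 0$ holds because $z = 0$ would make both sides of the claim vanish trivially (in the augmented embedding, a rating direction always has a nonzero $\theta_0$ coordinate). Hence $\delta>0$.
\item Suppose $r < \Phi'(z^\top\beta^*)$. This case is symmetric to the previous one and gives $\delta<0$.
\end{itemize}

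Finally, we translate back. For a comparison, $\delta = \theta^*_{bc}(\bD')-\theta^*_{bc}(\bD)$ and $\Phi=\Phi_f$. For a rating, $\delta=\theta^*_{a0}(\bD')-\theta^*_{a0}(\bD)$ and $\Phi=\Phi_g$. This yields both sign identities. For the trivial $z=0$ comparison case, both sides are $0$ provided one reads $\Phi_f'(0)$ as matched by $r$; otherwise we simply note that $b\neq c$ with distinct embeddings is implicitly assumed.
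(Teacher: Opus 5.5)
Your argument is correct, apart from the $z=0$ case discussed below, but it takes a different route from the paper.

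\textbf{How the two routes differ.} The paper argues in flexible GBT by chaining two lemmas. First, a zero-update lemma: adding the datum with value $r_0=\Phi_f'(\theta^*_{bc}(\bD))$ leaves the MAP unchanged. Second, pairwise monotonicity, proved by implicitly differentiating the first-order condition in the observed value, which gives $d\theta^*_{bc}/dr = x_{bc}^\top (\nabla^2\mathcal L)^{-1} x_{bc}\ge 0$. One then slides the added value from $r_0$ to $r$.

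You instead compare the old and new minimizers directly, using strong monotonicity of $\nabla\mathcal L$ and monotonicity of $\Phi'$, with no interpolation path.

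\textbf{What each route buys.} The paper's route is modular and gives a quantitative sensitivity formula. It relies on the implicit function theorem, hence twice-differentiable $\Phi$ (available here). More importantly, its monotonicity lemma is stated only with weak inequalities. The strict sign claimed by the proposition therefore needs an unstated extra step, namely $x_{bc}^\top(\nabla^2\mathcal L)^{-1}x_{bc}>0$. Your route needs only first-order conditions and convexity of $\Phi$, and it handles strictness explicitly in your $\delta=0$ subcase.

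\textbf{The $z=0$ case needs correcting.} You claim that $z=0$ ``would make both sides of the claim vanish trivially''. That is false. If $x_b=x_c$, the added term $\Phi_f(0)$ is a constant, so the MAP is unchanged and the left-hand side is $0$. The right-hand side, however, is $\Sign(r-\Phi_f'(0))$, which is generically nonzero; for an even root law $\Phi_f'(0)=0$, so any $r\neq 0$ gives a counterexample.

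So the proposition is actually false in that case. The case is not hypothetical: it occurs for two entities of the same class under the one-hot encoding of Section~\ref{sec:GBTdatamodel}.

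The correct fix is to add the hypothesis $x_{bc}\neq 0$. For ratings this holds automatically, as you note, because the augmented direction has a $-1$ in the threshold coordinate. The paper's own proof silently needs the same hypothesis. Your closing sentence gestures at this, but it should be stated as an explicit assumption rather than treated as a trivial case.
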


\begin{proof}
    See Appendix \ref{app:update_flexGBT}.
\end{proof}

Thus the expected value of an observation is the unique fixed point 
at which the MAP score remains unchanged; 
smaller (resp.\ larger) observations decrease 
(resp.\ increase) the corresponding score difference.

\subsection{Resilience Properties}

Define an elementary edit as 
the addition or the removal of a single rating or of a single comparison.
We define $\Delta(\bD, \bD')$ as the minimal number of elementary edits 
(addition, removal or modification of a rating/comparison)
necessary to move from dataset $\bD$ to dataset $\bD'$.
A scoring method is \emph{Lipschitz-resilient} 
if the impact on learned scores of the edits is bounded
by the number of elementary edits, up to a multiplicative constant. 
We then have the following resilience guarantee.

\begin{proposition}[Lipschitz resilience]
\label{prop:resilience_SCoRa}
For all datasets $\bD$ and $\bD'$, 
the SCoRa MAP estimator \((\beta^*, \theta_0^*)\) satisfies
\begin{align} \label{eq:thetatheta0Lipschitz}
  \|\theta^*(\bD) - \theta^*(\bD')\|_2 
  &\leq L \|x \|_2 \, \Delta\big(\bD, \bD'\big), \\
  |\theta_0^*(\bD) - \theta_0^*(\bD')| 
  &\leq L \, \Delta\big(\bD, \bD'\big),  
\end{align}
with
$L \triangleq 4 \sigma_{\max}^2 \max \set{R_{\max}, T_{\max}} \max \set{ 1,  \|{x}\|_{2,\infty} }$,
where $R_{max}$ and $T_{max}$ are the maximal comparison and rating values 
(which may be infinite)
and where \(\|x\|_{2,\infty} := \max_a \|x_a\|_2\).
\end{proposition}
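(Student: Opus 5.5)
We would first reduce to a single elementary edit. Since $\Delta(\bD,\bD')$ counts edits, there is a chain $\bD = \bD_0, \bD_1, \dots, \bD_k = \bD'$ with $k = \Delta(\bD,\bD')$ in which consecutive datasets differ by one elementary edit, and the triangle inequality reduces both bounds to the case $k=1$. A modification can be treated as a removal followed by an addition, which is where the factor $4$ (rather than $2$) will come from. By symmetry, it then suffices to bound the change in the MAP when a single comparison $(b,c,r)$ or a single rating $(a,t)$ is added to a dataset.

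We would work in the flexible GBT formulation of Theorem~\ref{theo:SACEisFlex}, with augmented variable $\gamma = (\beta,\theta_0) \in \setR^{D+1}$. In this formulation a comparison has augmented embedding difference $z = (x_{bc},0)$ and a rating has $z = (x_a,-1)$. The loss $\mathcal L$ is $\sigma_{\max}^{-2}$-strongly convex, by the proposition on the explicit form of the loss. Write $\mathcal L' = \mathcal L + \ell$, where the added term is
\[
\ell(\gamma) = \Phi(z^\top\gamma) - s\, z^\top \gamma,
\]
with $s \in \{r,t\}$. Let $\gamma^*$ and $\gamma'^*$ be the two minimizers. Strong convexity of $\mathcal L$ together with convexity of $\ell$ gives
\[
\sigma_{\max}^{-2}\|\gamma'^*-\gamma^*\|_2^2 \le \langle \nabla\mathcal L(\gamma'^*) - \nabla \mathcal L(\gamma^*), \gamma'^*-\gamma^*\rangle = -\langle \nabla \ell(\gamma'^*), \gamma'^*-\gamma^*\rangle ,
\]
using $\nabla\mathcal L(\gamma^*)=0$ and $\nabla\mathcal L(\gamma'^*) = -\nabla\ell(\gamma'^*)$. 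Cauchy--Schwarz then yields $\|\gamma'^*-\gamma^*\|_2 \le \sigma_{\max}^2 \|\nabla\ell(\gamma'^*)\|_2$. Moreover,
\[
\nabla \ell(\gamma) = (\Phi'(z^\top\gamma) - s)\, z .
\]

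The key step, and the one we expect to be the main obstacle, is bounding $|\Phi'(\cdot) - s|$ uniformly. We would use Proposition~\ref{prop:moments_embedding}: $\Phi'$ is the mean of a GBT distribution whose support is contained in that of the root law, so $|\Phi'|\le R_{\max}$ for comparisons and $|\Phi'| \le T_{\max}$ for ratings. Since the observation $s$ is bounded by the same quantity, $|\Phi'-s| \le 2\max\{R_{\max},T_{\max}\}$. For the embedding factor, $\|z\|_2 \le \|x_b\|_2+\|x_c\|_2 \le 2\|x\|_{2,\infty}$ for a comparison, and $\|z\|_2 \le \sqrt{\|x_a\|_2^2+1}$ for a rating. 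Reconciling these constants with the stated $L$ (factor $4$, $\max\{1,\|x\|_{2,\infty}\}$) requires care. We would try to absorb the factor $2$ from $\|x_{bc}\|$ and the factor $2$ from $|\Phi'-s|$ into the $4$, and treat the rating case by bounding $\sqrt{\|x_a\|^2+1}$ by twice $\max\{1,\|x\|_{2,\infty}\}$. If the constants do not match exactly, we would settle for a bound of the same form with a slightly larger absolute constant. This step yields $\|\gamma'^*-\gamma^*\|_2 \le L$ per edit.

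Finally, we would project the bound onto the two components. For the threshold, $|\theta_0'^*-\theta_0^*| \le \|\gamma'^*-\gamma^*\|_2 \le L$. For the scores, $\theta^* = x^\top\beta^*$, so
\[
\|\theta'^*-\theta^*\|_2 \le \|x\|_2\,\|\beta'^*-\beta^*\|_2 \le L\|x\|_2 .
\]
Summing over the chain of edits gives both inequalities in \eqref{eq:thetatheta0Lipschitz}. When $R_{\max}$ or $T_{\max}$ is infinite, the statement is trivial.
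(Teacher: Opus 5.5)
Your route is essentially the paper's. Strong convexity of the augmented loss with modulus $\sigma_{\max}^{-2}$, applied at the other minimizer, turns a uniform bound on the per-datum gradient $(\Phi'(z^\top\gamma)-s)\,z$ into a bound on the displacement of $(\beta^*,\theta_0^*)$. You then project onto $\theta_0$ and apply $x^\top$. The paper sums all edits into one gradient difference rather than chaining single edits, which is immaterial. For a single addition or removal your constants are right: the factor $2\max\{R_{\max},T_{\max}\}$ times $\|z\|_2\le 2\max\{1,\|x\|_{2,\infty}\}$ gives exactly $L$.

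The one real flaw is the bookkeeping for modifications. The paper's $\Delta(\bD,\bD')$ counts a modification as a single elementary edit. Decomposing it into a removal plus an addition therefore costs $2L$, so your chain only yields $\|\gamma^*(\bD)-\gamma^*(\bD')\|_2\le 2L\,\Delta(\bD,\bD')$, not the stated bound. Your remark that this decomposition is ``where the factor $4$ comes from'' double-counts: the $4$ is already fully consumed by one addition, as computed above.

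You do not need an enlarged constant, though. Treat a modification $s\to s'$ of the same datum directly. The loss then changes only by the linear term $(s-s')\,z^\top\gamma$, whose gradient has norm $|s-s'|\,\|z\|_2\le 2\max\{R_{\max},T_{\max}\}\cdot 2\max\{1,\|x\|_{2,\infty}\}$. The same strong-monotonicity inequality then gives a displacement of at most $L$ per modification. This is exactly how the paper handles its $\bD^{\neq}$ terms.
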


\begin{proof}
    See Appendix~\ref{app:resilience_flexGBT_pairs}.
\end{proof}

The Lipschitz constants $\|x\|_2 L$ and $L$ quantify the worst-case sensitivity of the estimator to data edits. 
Their dependence on the variance parameters $\sigma_\beta^2$ and $\sigma_0^2$ 
reflects the strength of regularization: 
small variances induce strong shrinkage 
toward the prior which \emph{improves} resilience,
at the cost of reduced data fitting.

\section{Empirical Convergence}
\label{sec:evaluation}
In this section, we present some empirical evaluations of the SCoRa-MAP estimators on synthetic data,  with a known ground truth.
All experiments were run several times (the number depends on the plot) 
and the average is plotted along with the 95\% confidence intervals
\footnote{For reproducibility, our code is available at \url{https://anonymous.4open.science/r/scora_icml_2026}}. 

\subsection{Generative Models}

\paragraph{GBT parameters}
The parameters of our generative model 
 are the number $A$ of entities and the ground truth parameters of the GBT model $(f^\dagger, g^\dagger)$.
We restrict our experiments to $k$-ary and uniform root laws for generation, so $f^\dagger$ and $g^\dagger$ are controlled by $k_c$ and $k_r$, with convention $k=\infty$ for (continuous) uniform.
We parametrize the root laws used by SCoRa by $(f, g)$.
Unless specified otherwise, we use the true root laws, i.e. $(f,g) = (f^\dagger, g^\dagger)$, thus controlled by $k_r$ and $k_c$.

\paragraph{Modeling relative elicitation costs.}
We assume that the user is given a \textit{budget} $\budget$ that they can spend on either ratings or comparisons, or a combination of the two. 
This budget represents the effort or the time that the user is willing to spend on answering queries. 
In order to model the cognitive costs of each tasks mentioned Section \ref{sec:introduction} we use parameters $\compcost$ and $\rtcost$, which respectively control the cost in budget units of one comparison and one rating.
To decide the budget allocation, we use the parameter $\pc$, which controls the proportion of the budget allocated to comparisons, the rest being allocated to ratings.

    Our parameters are thus a tuple $(A,  k_c, k_r, \budget, \pc, \compcost, \rtcost)$.
    Given them, the data are generated as follows.
    
    \begin{itemize}
        \item We generate (hidden) ground truth $\beta^\dagger$ and $\threshold^\dagger$ independently from iid Gaussian zero-mean priors with variance $1$.
        We then compute the ground truth scores $\theta^\dagger=x^\top \beta^\dagger$.
        \item We split the budget and compute the associated number of comparisons and ratings $\ncomps$ and $\nrts$:    $$\ncomps=\left\lfloor \frac{\pc \budget}{\compcost} \right\rfloor \quad \text{and} \quad  
        \nrts= \left\lfloor \frac{(1-\pc) \budget}{\rtcost} \right\rfloor.$$
        \item We generate a random set $\mathcal{A}$ of $\nrts$ elements of $[A]$ to be rated.
        These elements are selected uniformly at random among all alternatives, with duplicates allowed.
        \item We generate a random set $\mathcal{C}$ of $\ncomps$ pairs of (different) elements of $[A]$ to be compared.
        These pairs are selected uniformly at random among all pairs, with duplicates allowed.
        \item We generate $r$ and $t$ independently according to \eqref{eq:lawComp} and \eqref{eq:lawAssess} (where the root laws are controlled by $k_c$ and $k_r$) to construct our dataset $\bD$. 
    \end{itemize}

    The data observed by the learning algorithm is thus a tuple $(A, k_r, k_c, \bD)$.
    We then run our SCoRa-MAP estimator, 
    with $k_r$-ary as the $g$-root law, $k_c$-ary as the $f$-root law, and with the correct prior variances $1$, to obtain an estimate $\hat{\theta} = \theta^*(\bD)$ of $\theta^\dagger$,
    using LBFGS~\cite{DBLP:journals/mp/LiuN89} as the optimizer.

\paragraph{Accuracy metric}
We compute the correlation between the estimator and the ground truth $\textsc{Corr}(\hat{\theta}, \theta^\dagger)$.

\paragraph{Parameter choice}
For all our experiments, we keep fixed the values $A=100$ 
and we vary the parameters $\budget, \pc, \compcost, \rtcost, k_r, k_c$. \\
In this section, we set the embedding matrix $x$ to be identity.
We use $k_r=k_c=\infty$, i.e., we set the root laws to be uniform.
We set $\compcost=\rtcost=1$
and $\pc \in \{0, 0.5, 1\}$, which 
corresponds to the three different lines on the plots of Figure \ref{fig:convergence}.
On the right plot of Figure \ref{fig:convergence}, we also test the resilience of SCoRa to model mismatch by setting $(f,g)$ to be gaussian root laws despite $(f^\dagger, g^\dagger)$ being uniform laws.
Finally, for both plots, we vary the budget $\budget$ from $10$ to $10^5$. 

\begin{figure*}[ht]
\begin{center}
\begin{subfigure}{0.45\linewidth}
    \includegraphics[width=\linewidth]{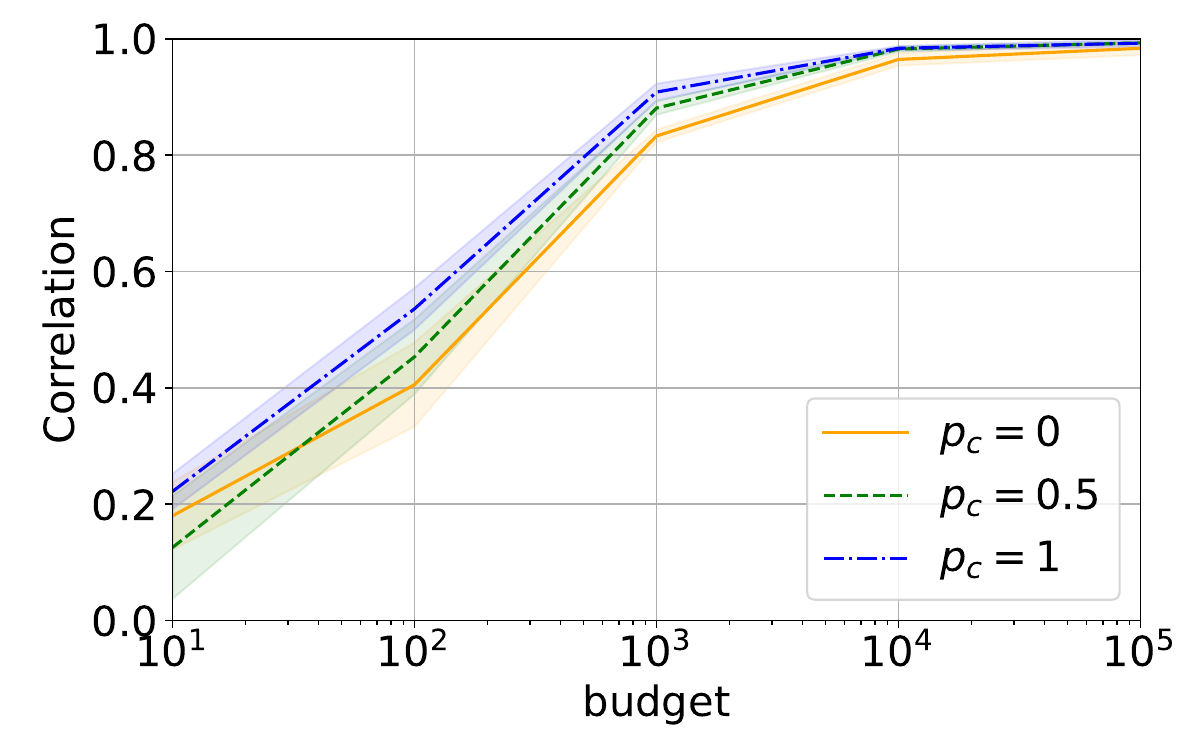}   
    \caption{Without model mismatch}
    \label{fig:convergence_match}
\end{subfigure}
\begin{subfigure}{0.45\textwidth}
    \includegraphics[width=\linewidth]{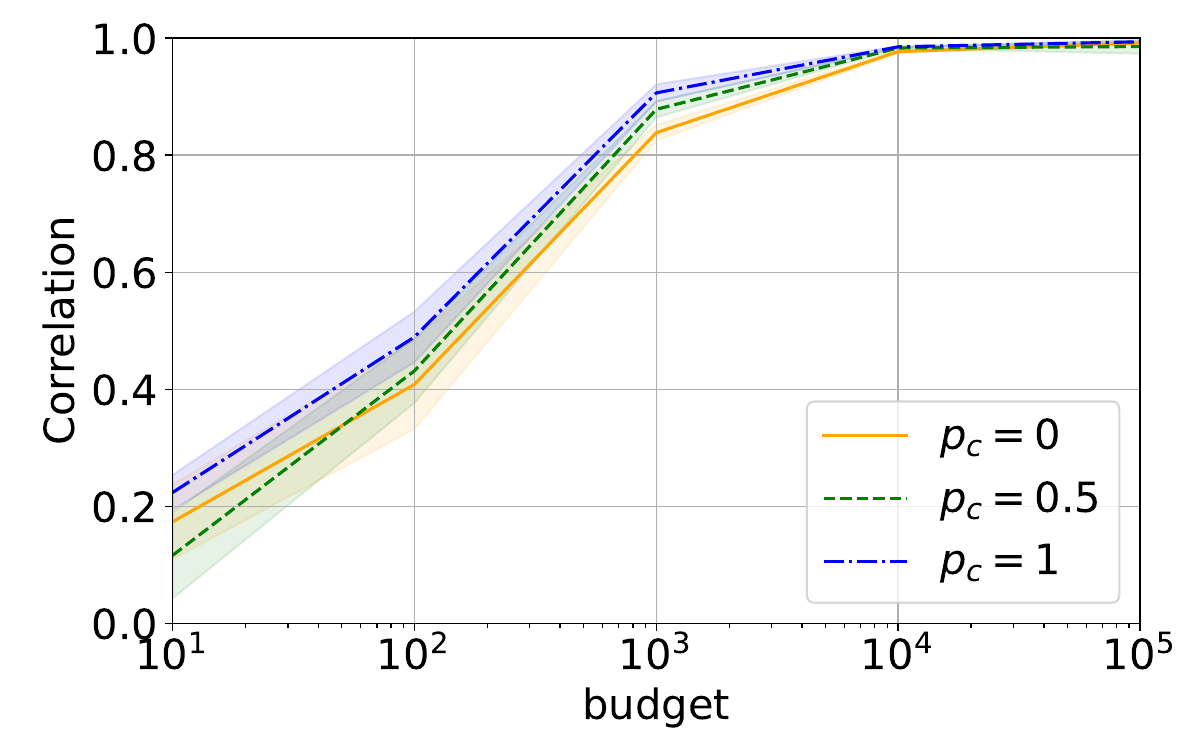}
\caption{With model mismatch}
    \label{fig:convergence_mismatch}
\end{subfigure}
    \caption{Convergence of the recovered scores irrespective of the mix, even under model mismatch. 
    We see that the correlation goes to one as the  the budget increases, whether we use only comparisons, only ratings, or a mix of the two.
    Uniform root laws are used to generate the data of both plots. 
    On the left, uniform root laws are also used for the inference, whereas gaussian root laws are used for inference on the right.
    We use parameters $k_r=k_c=\infty$, $\compcost=\rtcost=1$ and $\pc \in \{0, 0.5, 1\}$.
    } 
    \label{fig:convergence}
\end{center}
\end{figure*}

        \subsection{Results}

Figure~\ref{fig:convergence} plots the quality of the scores recovery depending on the budget, for different allocations of the budget.
Clearly, and as should be expected, increasing the number of comparisons and ratings improves the accuracy, irrespective of the mix.
We observe that whether we have only comparisons, only ratings, or a mix of the two, we asymptotically recover the scores perfectly as the budget goes to infinity.
The three curves are closely aligned, with a slight edge for pure comparisons ($p_c=1$). 
The recovery of the scores is only very slightly affected by the usage of gaussian root laws instead of uniform ones, as evidenced by the high similarity between the two plots of Figure \ref{fig:convergence}.
In the next section, we show cases where combining comparisons and ratings yields superior performance.

\section{Why Use Comparisons and Ratings?}
\label{sec:SCoRa_value}

There is a large literature that aims to compare ratings and comparisons \cite{ shah2016estimation, wang2018your}.
In this section,  we exhibit a setting 
where combining both ratings and comparisons 
outperforms both rating-only and comparison-only elicitation.

\subsection{A Case where Combining Both is Empirically Best}
    \label{subsec:acase}

From the previous simulations, 
it might seem that there is no gain in combining both ratings and comparisons.
In this section, we look for a setting, which is somewhat realistic, and where combining both ratings and comparisons is actually best.
This setting is derived from two key assumptions, namely top-entity-based accuracy metric and active learning.

\paragraph{Metric: Top entity scoring matters most.}
First, as often done in the context of recommendation systems, 
we make the hypothesis that scoring correctly the high scores is 
a lot more important than scoring correctly the low scores. 
This is typically because the system's behavior mostly depends on how top entities are recommended.
    To account for this, we introduce an accuracy measure
    which weighs more heavily top entity errors.
    More precisely, we consider a weighted correlation metric.
    \begin{equation}
    \label{eq:correxp}
        \textsc{CorrExp}(\hat{\theta}, \theta^\dagger) 
        = \frac{\sum_i w_i \hat{\theta}_i \theta^\dagger_i}{\sqrt{\sum_i w_i \hat{\theta}^2_i }\sqrt{\sum_i w_i (\theta^\dagger_i)^2}}
    \end{equation}
    where $w_i = \exp(\theta^\dagger_i)$ gives an exponentially large weight to high ground-truth scores.
    Note that this measure remains invariant to the rescaling of the learned scores. 

\paragraph{Active learning.} 
We assume that the ratings are performed first, and then used to make a first estimate of the scores in order to prioritize the comparison of well-scored alternatives for the comparisons.
In practice, it means that we run SCoRa twice.
We first run it with only the ratings, which returns some scores estimates $\Tilde{\theta}$.
We then  use $\Tilde{\theta}$ to bias the selection of the pairs compared.
More precisely, we generate a random set $\mathcal{C}$ of $\ncomps$ pairs of (different) elements of $[A]$ selected independently at random,  where the probability of a pair $a,b$ to be selected is proportional to $\exp(\Tilde{\theta}_a +\Tilde{\theta}_b)$.

\paragraph{Parameter choice.}

As in previous section, the parameters of our experiments are a tuple $(A,  k_c, k_r, \budget, \pc, \compcost, \rtcost)$.
As before, we fix the number of alternatives to $A=100$.
In this section, the coordinates of $\beta$ are generated using a Cauchy distribution of mode $0$ and scale $1$, in order to have a heavy-tailed distribution of the scores.  \\
In Figure \ref{fig:sweet-uniform} we set $\compcost=3$, $\rtcost=1$, $k_r=k_c=\infty$ and we plot three different lines for $\budget\in \{500, 1000, 1500\}$. \\
In Figure \ref{fig:sweet-binary} we set $\compcost=8$, $\rtcost=1$, $k_r=k_c=2$ and we plot three different lines for $\budget\in \{5000, 10000, 20000\}$. \\
Throughout this section, we plot the weighted correlation (see \eqref{eq:correxp}) as a function of the fraction $\pc$ of the budget allocated to comparisons.
An additional setting, in which comparisons are used for the first phase of active learning, is available in Appendix \ref{app:experiments}.

\paragraph{Adding embeddings.}
For our last experiment (Figure \ref{fig:sweet-one-hot}), we add simple embeddings using a one-hot encoding (see Section \ref{sec:GBTdatamodel}).
Each alternative is associated with a one-hot encoded vector (one and four zeros), simulating a known partition of the items into five clusters.
Items belonging to the same cluster share the same vector, which is orthogonal to those of the other clusters.
The final embedding matrix $x$, of shape $A \times (A + 5)$, is obtained by concatenating the identity matrix of size $A$ with the $A \times 5$ matrix of one-hot encoded vectors.
Each item is independently assigned to one of the five clusters with probability $0.2$.
For this experiment, we set $\compcost=8$, $\rtcost=1$, $k_r=k_c=2$, and plot three curves corresponding to $\budget\in \{10000, 20000, 100000\}$.

\subsection{Results}
The results are shown in Figures \ref{fig:sweet-uniform}, \ref{fig:sweet-binary}, and \ref{fig:sweet-one-hot}.
For a fixed budget, the best performance is consistently achieved by allocating it to a mix of comparisons and ratings, with the largest gains observed at higher budgets.
In particular, in Figure \ref{fig:sweet-one-hot}, for $\budget=10^5$, the weighted correlation increases from around $0.6$ when using only ratings or only comparisons to over $0.9$ when the budget is split.
We also observe a long plateau from $\pc=0.1$ to $\pc=0.9$, where the weighted correlation remains close to $0.9$.
As long as at least $10\%$ of the budget is allocated to each modality, performance is essentially unchanged.
This suggests that when relying on a single modality, collecting even a small number of queries from the other can lead to a substantial improvement.

Finally, comparing Figures \ref{fig:sweet-uniform} and \ref{fig:sweet-binary}, we observe that similar performance can be achieved with a significantly lower budget when using uniform rather than binary input.
This is consistent with the observations of \citet{fageot2023generalized}, although uniform input may be more sensitive to noise in practical settings.

\begin{figure}[ht]
\centering
  \includegraphics[width=\linewidth]{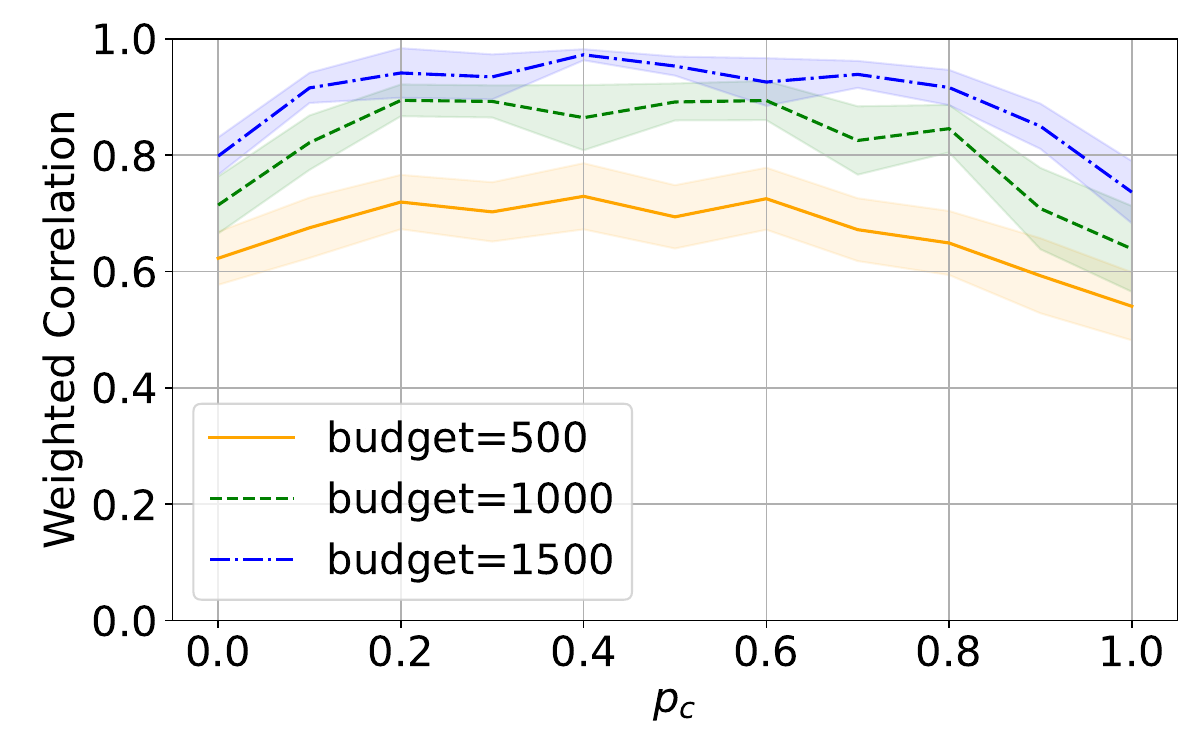}
\caption{Weighted Correlation for $\compcost=3$, $\rtcost=1$, $k_r=k_c=\infty$ and  $\budget\in \{500, 1000, 1500\}$. }
\label{fig:sweet-uniform}
\end{figure}

\begin{figure}[ht]
\centering
  \includegraphics[width=\linewidth]{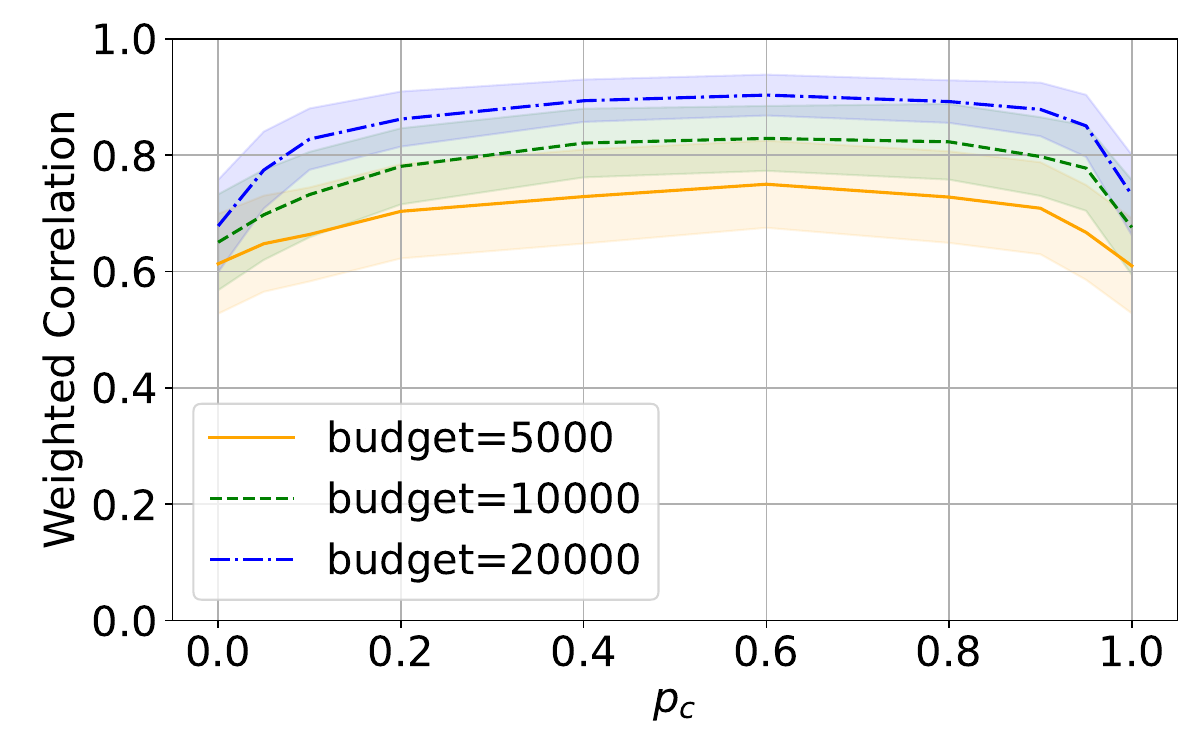}
\caption{ Weighted Correlation for $\compcost=8$, $\rtcost=1$, $k_r=k_c=2$ and $\budget\in \{5000, 10000, 20000\}$.}
\label{fig:sweet-binary}
\end{figure}

\begin{figure}[ht]
\centering
\centerline{\includegraphics[width= \linewidth]{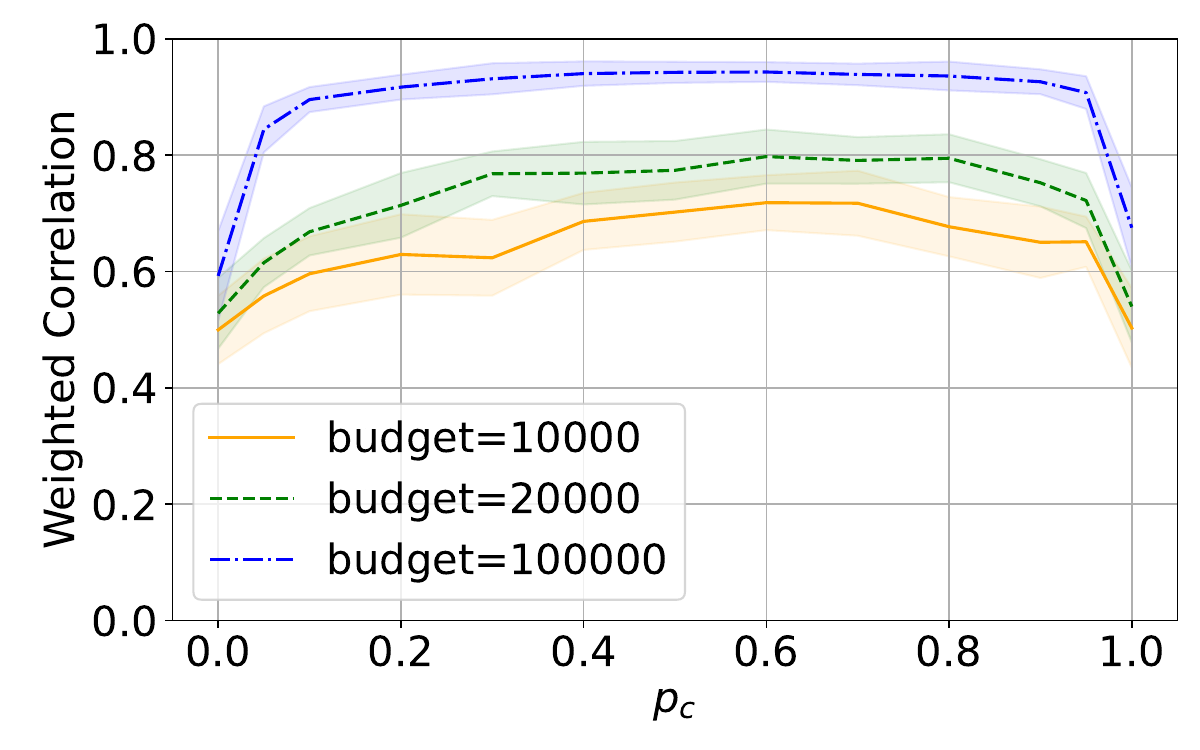}}
\caption{Weighted Correlation for $c_c=8$, $c_r=1$, $k_c=k_r=2$. 
and $\budget \in \{10^4, 2.10^4, 10^5\}$.
We use one-hot-encoded embeddings.}
\label{fig:sweet-one-hot}
\end{figure}

\vspace{-4pt}
\subsection{Intuition}

\vspace{-3pt}
There are several reasons why combining comparisons and ratings is valuable. 
Some are covered by our experiments but not all.
Let us discuss some of them informally.

\vspace{-10pt}
\paragraph{Results of the experiments.}
Let us provide an intuition for why our previous setting favored a combination of ratings and comparisons.
 As ratings are less costly, they quickly provide a lot of rough information.
However, they fail to robustly distinguish the top entities, because they all have the same reference point $\theta_0$.
But in settings where the ranking of top entities is especially important,
this deeply hurts the value of the learned scores.
This is where the costly comparisons between top entities become valuable.

\paragraph{Highly saturated ratings.}
Evidently, if the discretization of ratings is small (e.g. only like or dislike),
then ratings will fail to provide accurate scoring for top entities.
This  still be the case when ratings are more granular,
if users overuse extreme ratings.
In our experiments, this limitation is overcome by the possibility of having each entity rated arbitrarily many times. 
However, in practice it seems unrealistic to collect several independent ratings of the same item by the same user.

\paragraph{Different users have different relative comparing cost.}
This alone could justify having a system that elicits both ratings and comparisons.
In fact, even a single user could feel different elicitation costs,
depending on their fatigue or cognitive alertness.

\vspace{-2pt}
\paragraph{Heavy tail ground-truth score distribution.} 
Top entities may still have large differences, 
as was observed by~\cite{hoang2022tournesol} in the context of video recommendation.
Typically, the very top scientific papers may still be significantly more worth reading
than other top scientific papers.
The normal distribution of scores fails to capture this intuition.
At least for some applications, it can be more relevant to consider heavy-tailed distributions of ground-truth scores,
e.g. drawn from a Cauchy distribution, as in Section \ref{sec:SCoRa_value}.

\vspace{-2pt}
\paragraph{Engaging with a content is costly.}
When evaluating entities like long videos, generated texts, or books,
most of the cost lies in the analysis of the entities.
In particular, once analyzed, the cost of rating the entity is low,
while the cost of comparing it to already analyzed entities can be reasonable,
compared to the cost of rating an unanalyzed entity.
Combining ratings and comparisons seems more useful in such a setting.
\vspace{-2pt}
\paragraph{Priors on entities.}
Additionally, in practice, while one should not ``judge a book by its cover'',
a Bayesian would have a prior on the book's worth based on metadata, or on other users' ratings.
This could allow them to optimize their active learning, 
thereby easily rating negatively entities that are extremely likely to be bad,
while carefully comparing those that look like they might be top  entities.

%

\section{Conclusion}
\label{sec:conclusion}
This paper introduced SCoRa, a new model that learns cardinal preferences
out of both direct ratings and potentially quantitative comparisons.
Our new model leverages the flexibility of the GBT models, 
to provide numerous desirable theoretical guarantees.
Our basic evaluation also shows good resilience to model mismatch.
Perhaps more interestingly, 
we used our SCoRa model to provide new insights into the classical debate
over whether ratings or comparisons should be favored.
While there are clearly settings where one strictly outperforms the other, 
assuming that comparisons are more costly,
we exhibited a setting, which we believe to be realistic,
where combining both comparisons and ratings is best.
We believe that our findings could help inform how to best elicit human preferences
to improve the performance and/or the ethics of algorithms.




\newpage
\section*{Impact Statement}

This paper presents work on advancing preference learning. 
While we acknowledge that our findings may be used to increase the addictiveness
of AI-driven products like chatbots and social networks,
we also believe they are essential to construct more trustworthy AIs,
whose construction better matches democratic norms
and whose behavior better reflects citizens' preferred AI behaviors.




\bibliographystyle{icml2026}
\bibliography{references}


\newpage
\appendix
\onecolumn
\onecolumn

\section{Reduction of SCoRa Models to Flexible GBT}
\label{app:SCoRa=flex}

\subsection{More precise formalization}

Let us first better formalize Theorem~\ref{theo:SACEisFlex}.
When we argue that any SCoRa model can be framed as a flexible GBT model,
we mean that there are three functions $H_1, H_2, H_3$ such that
\begin{itemize}
    \item $H_1$ maps any tuple of SCoRa hyperparameters $(D, \sigma_\beta^2, \sigma_0^2, f, g, x)$ to a tuple of flexible GBT hyperparameters $(\tilde D, \Sigma_\beta, (f_{ab}), \tilde x)$,
    \item $H_2$ maps any SCoRa latent vector $(\beta, \theta_0)$ to a flexible GBT latent vector $\tilde \beta$.
    \item $H_3$ maps injectively any SCoRa dataset $\bD$ to a flexible GBT dataset $\tilde \bD$,
    This function may also leverage SCoRa hyperparameters.
\end{itemize}
Moreover, we demand that, intuitively, $H_3$ commute with $(H_1, H_2)$.
More precisely, we want the distribution of $H_3(\bD)$ 
given hyperparameters $(D, \sigma_\beta^2, \sigma_0^2, f, g, x)$ 
and latent variables $(\beta, \theta_0)$
to coincide with the distribution of $\tilde \bD$ 
given hyperparameters $H_1(D, \sigma_\beta^2, \sigma_0^2, f, g, x)$ 
and $H_2(\beta, \theta_0)$.

This formalizes the theorem.
Indeed, if the distributions do coincide,
then $\tilde \bD$ is, with probability 1, in the image of $H_3$.
Since $H_3$ is invertible, this means that we can reconstruct the data
$\bD$ as $H_3^{-1}(\tilde \bD)$.
Thereby we have generated the data $\bD$,
by leveraging the flexible GBT.

Perhaps most importantly for inference, 
assuming additionally that $H_2$ be injective.
this construction also guarantees 
that the posterior distribution on SCoRa latent variables given $\bD$
is the distribution of $H_2^{-1}(\tilde \beta)$ given $H_3(\bD)$.
As we especially focus on MAP, 
this will imply that we can reconstruct guarantees on 
$(\beta^*, \theta_0^*) = H_2^{-1}(\tilde \beta^*)$ 
out of guarantees on $\tilde \beta^*$.

\subsection{Proof of Theorem \ref{theo:SACEisFlex}}

\begin{proof}[Proof of Theorem \ref{theo:SACEisFlex}]
Consider a SCoRa model with hyperparameters 
$(D, \sigma_\beta^2, \sigma_0^2, f, g, x)$
and data $\bA \cup \bC$.
We construct the corresponding flexible GBT model 
by mapping the latent vector $\beta \in \setR^D$ and its prior variance $\sigma_\beta^2$ to 
\begin{equation}
    \tilde{\beta} = H_2(\beta, \theta) \triangleq \begin{pmatrix} \beta \\ \theta_0 \end{pmatrix} \in \setR^{D+1}, 
    \qquad
    \tilde \Sigma_\beta \triangleq \textsc{Diag} \left(
        \underbrace{\sigma_\beta^2, \ldots, \sigma_\beta^2}_{D~\text{values}},
        \sigma_0^2
    \right) \in \setR^{(D+1) \times (D+1)}.
\end{equation}
Moreover, we extend the embeddings $x \in \setR^{D \times A}$ of the entities
by defining the flexible GBT embeddings
\begin{equation}
\tilde{x}_a \triangleq \begin{pmatrix} x_a \\ 0 \end{pmatrix}, 
\qquad
\tilde{x}_{A + 1} \triangleq \begin{pmatrix} 0_D \\ 1 \end{pmatrix},
\end{equation}
thereby yielding the embedding matrix $\tilde x \in \setR^{(D+1) \times (A+1)}$.
Put differently, we defined the hyperparameter reparametrization
$H_1(D, \sigma_\beta^2, \sigma_0^2, f, g, x) \triangleq (\tilde D, \tilde \Sigma_\beta, \tilde x)$.

We then construct a dataset $\tilde \bD \triangleq H_3(\bD)$ 
for flexible GBT out of the dataset $\bA \cup \bC$ for SCoRa
(and implicitly using the hyperparameters of SCoRa).
We first map each SCoRa comparison $(n, b_n, c_n, r_n)$ for $n \in [\Nc]$
to the flexible-GBT comparison $(n, b_n, c_n, r_n, f)$.
We then map each SCoRa rating $(n, a_n, t_n)$ for $n \in [\Na]$
to the flexible-GBT comparison $(\Nc + n, a_n, A+1, t_n, g)$.
Collecting all these data then yields $\tilde \bD$.

We then prove the equivalence of the original SCoRa model
and of the constructed flexible GBT model
by showing that their negative log-likelihood functions match.
Note that for $a \in [A]$, 
$\tilde x_a^\top \tilde \beta = x_a^\top \beta$
and $\tilde x_{A+1}^\top \tilde \beta = \theta_0$.
Indeed, we have
\begin{align}
    &\mathcal L_{FGBT} (\tilde \beta | \tilde \bD)
    \triangleq 
        \frac{\tilde \beta \tilde \Sigma_\beta^{-1} \tilde \beta}{2} 
        + \sum_{(\tilde a, \tilde b, \tilde r, \tilde f)} \left(
            \Phi_{\tilde f} (\tilde x_{\tilde a \tilde b}^\top \tilde \beta)SAC
            - \tilde r \tilde x_{\tilde a \tilde b}^\top \tilde \beta
        \right) \\
    &= 
        \frac{1}{2} \begin{pmatrix} \beta \\ -\theta_0 \end{pmatrix}^\top 
            \textsc{Diag} \left(
                \underbrace{\sigma_\beta^2, \ldots, \sigma_\beta^2}_{D~\text{values}},
                \sigma_0^2
            \right)^{-1}
            \begin{pmatrix} \beta \\ -\theta_0 \end{pmatrix} 
         + \sum_{n=1}^{\Nc} \left(
            \Phi_{f} (\tilde x_{b_n c_n}^\top \tilde \beta)
            - r_n \tilde x_{b_n c_n}^\top \tilde \beta
        \right) \notag \\
        &\qquad \qquad \qquad \qquad \qquad \qquad \qquad \qquad \qquad
        + \sum_{n=1}^{\Nc}\left(
            \Phi_{g} (\tilde x_{a_n}^\top \tilde \beta)
            - t_n \tilde x_{a_n}^\top \tilde \beta
        \right) \\
    &= 
        \frac{1}{2} \begin{pmatrix} \beta \\ -\theta_0 \end{pmatrix}^\top 
            \textsc{Diag} \left(
                \underbrace{\sigma_\beta^{-2}, \ldots, \sigma_\beta^{-2}}_{D~\text{values}},
                \sigma_0^{-2}
            \right)
            \begin{pmatrix} \beta \\ -\theta_0 \end{pmatrix} 
        + \sum_{(b, c, r) \in \bC} \left(
            \Phi_{f} (\tilde x_{b c}^\top \tilde \beta) - r \tilde x_{b c}^\top \tilde \beta
        \right) \notag \\
        &\qquad \qquad \qquad \qquad \qquad \qquad \qquad \qquad \qquad
        + \sum_{(a, t) \in \bA}\left(
            \Phi_{g} (\tilde x_{a}^\top \tilde \beta - \tilde x_{A+1}^\top \tilde \beta)
            - t_n (\tilde x_{a}^\top \tilde \beta - x_{A+1}^\top \tilde \beta)
        \right) \\
    &= \frac{\norm{\beta}{2}^2}{2\sigma_\beta^2} + \frac{\theta_0^2}{2\sigma_0^2}
        + \sum_{(b, c, r) \in \bC} \left(
            \Phi_{f} (x_{b c}^\top \beta) - r x_{b c}^\top \beta
        \right)
        + \sum_{(a, t) \in \bA}\left(
            \Phi_{g} (x_{a}^\top \beta - \theta_0)
            - t_n (x_{a}^\top \beta - \theta_0)
        \right) \\
    &= \mathcal L_{SCoRa} (\beta | \bA \cup \bC)
\end{align}
This concludes the proof.
\end{proof}
    
\section{Proofs of Section \ref{sec:model}}
\label{app:theo}

\subsection{Mean Relation for SCoRa Model (Proposition ~\ref{prop:MAPthreshold})}
\label{proof:propmean}

Before proving Proposition~\ref{prop:MAPthreshold},
we prove a lemma on flexible GBT.

\begin{lemma}
\label{lemma:fgbt_no_embedding}
    Assume $x = I_A$ in flexible GBT (i.e. no embedding).
    Then for any flexible-GBT dataset $\bD$,
    we have ${\bf 1}^T \Sigma^{-1}_\beta \beta^*(\bD) = 0$.
\end{lemma}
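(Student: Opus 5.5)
With $x = I_A$ we have $D = A$ and $\theta = \beta$. The plan is to exploit the translation invariance of the likelihood along the all-ones direction and read off the first-order optimality condition in that direction.

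\textbf{Invariance.} In the flexible GBT loss, every likelihood term depends on $\beta$ only through $x_{ab}^\top \beta = \beta_a - \beta_b$. Replacing $\beta$ by $\beta + s\mathbf 1$ for $s \in \setR$ leaves these differences unchanged. Hence
\[
\mathcal L(\beta + s \mathbf 1 \mid \bD) = \frac{(\beta + s\mathbf 1)^\top \Sigma_\beta^{-1} (\beta + s\mathbf 1)}{2} + \ell(\beta),
\]
where $\ell(\beta)$ collects the likelihood terms and does not depend on $s$.

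\textbf{Optimality in the translation direction.} Since $\beta^*(\bD)$ is the unique minimizer (strong convexity of the Gaussian prior term, with convex cumulant terms), the function $s \mapsto \mathcal L(\beta^* + s\mathbf 1 \mid \bD)$ is minimized at $s = 0$. Differentiating the quadratic in $s$ at $s=0$, using symmetry of $\Sigma_\beta^{-1}$, gives
\[
\mathbf 1^\top \Sigma_\beta^{-1} \beta^* = 0.
\]
Equivalently, this is the projection of the stationarity condition $\nabla \mathcal L(\beta^*) = 0$ onto $\mathbf 1$. Indeed, each likelihood gradient is a multiple of $x_{ab} = e_a - e_b$, which is orthogonal to $\mathbf 1$.

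\textbf{Main obstacle.} There is no real difficulty here. The only care needed is to justify differentiability: the cumulant generating functions $\Phi_f$ are smooth because the root laws have finite exponential moments. If one prefers to avoid derivatives, an alternative is to minimize the explicit quadratic in $s$. Its minimizer is $s^* = -\mathbf 1^\top\Sigma_\beta^{-1}\beta^* / (\mathbf 1^\top\Sigma_\beta^{-1}\mathbf 1)$, and uniqueness of the MAP forces $s^* = 0$.

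For the subsequent Proposition~\ref{prop:MAPthreshold}, one applies this lemma to the augmented model of Theorem~\ref{theo:SACEisFlex}. Note, however, that the embedding there is $(I_A, 0; 0, 1)$, which is the identity on $A+1$ entities. With $\tilde\Sigma_\beta^{-1}$ diagonal, the lemma gives $\sigma_\beta^{-2}\sum_a \theta_a^* + \sigma_0^{-2}\theta_0^* = 0$, which is exactly the claimed formula.
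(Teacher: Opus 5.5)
Your proof is correct and follows essentially the same route as the paper. The paper likewise sets $h(t) = \mathcal L(\beta^* + t\mathbf 1 \mid \bD) - \mathcal L(\beta^* \mid \bD)$, uses $x_{ab}^\top \mathbf 1 = 0$ to cancel the likelihood terms, and reads off $h'(0) = \mathbf 1^\top \Sigma_\beta^{-1}\beta^* = 0$ from optimality; your derivative-free variant via the explicit minimizer $s^*$ is a harmless addition, and since only the quadratic prior term varies in $s$, smoothness of $\Phi_f$ is not actually needed here.
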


\begin{proof}[Proof of Lemma~\ref{lemma:fgbt_no_embedding}]
    Denote $\beta^* \triangleq \beta^*(\bD)$ the unique MAP of flexible GBT
    with dataset $\bD$.
    Now define $h(t) \triangleq \mathcal L(\beta^* + t {\bf 1} | \bD) - \mathcal L(\beta^* | \bD)$,
    where ${\bf 1} \triangleq (1, 1, \ldots, 1) \in \setR^D$.
    Note that $h$ is clearly differentiable.
    Given that $x_{ab}^T {\bf 1} = x_a^T {\bf 1} - x_b^T {\bf 1} = 1 - 1 = 0$,
    we have $x_{ab}^T (\beta + t {\bf 1}) = x_{ab}^T \beta$.
    We then have
    \begin{align}
        h(t) 
        &= \frac{(\beta + t {\bf 1})^T \Sigma_\beta^{-1} (\beta + t {\bf 1})}{2} 
            + \sum_{(a, b, r, f) \in \bD} \left( 
                \Phi_f(x_{ab}^T (\beta + t {\bf 1})) - r x_{ab}^T (\beta + t {\bf 1}) 
            \right) \notag \\
        &\qquad \qquad \qquad - \frac{\beta^T \Sigma_\beta^{-1} \beta}{2} 
            - \sum_{(a, b, r, f) \in \bD} \left( \Phi_f(x_{ab}^T \beta) - r x_{ab}^T \beta \right) \\
        &= \frac{(\beta + t {\bf 1})^T \Sigma_\beta^{-1} (\beta + t {\bf 1})}{2} 
        - \frac{\beta^T \Sigma_\beta^{-1} \beta}{2}.
    \end{align}
    It follows that
    \begin{align}
        h'(t) = {\bf 1}^T \Sigma_\beta^{-1} (\beta + t {\bf 1}).
    \end{align}
    By optimality of $\beta^*$, 
    we know that $0 = h'(0) = {\bf 1}^T \Sigma_\beta^{-1} \beta$,
    which is the lemma.
\end{proof}

We can now complete the proof of Proposition~\ref{prop:MAPthreshold}.

\begin{proof}[Proof of Proposition~\ref{prop:MAPthreshold}]
Applying Lemma~\ref{lemma:fgbt_no_embedding} to the corresponding SCoRa model
implies 
\begin{align}
    {\bf 1}_{D+1} \textsc{Diag} \left( \sigma_\beta^{-2}, \ldots, \sigma_\beta^{-2}, \sigma_0^{-2} \right) \begin{pmatrix}
        \beta^*(\bD) \\ \theta_0^*(\bD)
    \end{pmatrix} = 0.
\end{align}
Expanding this equation yields
\begin{align}
    \sigma_\beta^{-2} \sum_{d = 1}^D \beta^*_d
    + \sigma_0^{-2} \theta_0^* = 0.
\end{align}
Now, given that $x = I_D$ (with $A = D$), 
we have $\theta_d^* = x_d^T \beta^* = \beta^*_d$.
Substituting symbolically $d$ with $a$ then yields
\begin{align}
    \sigma_\beta^{-2} \sum_{a = 1}^A \theta^*_a
    + \sigma_0^{-2} \theta_0^* = 0.
\end{align}
Rearranging the terms yields the proposition.
\end{proof}

\subsection{Conditional Moments (Proposition \ref{prop:moments_embedding})}
\label{app:moments_flexGBT}

We demonstrate Proposition~\ref{prop:moments_embedding} by considering the flexible linear GBT model with embedding $x$.

\begin{lemma}[Conditional moments]
\label{lemma:moments}
    Under flexible GBT, upon comparing $a$ to $b$ under root law $f$, 
    we have
    \begin{align}
        \expect [r | \beta] = \Phi_f' (x_{ab}^\top \beta)
        \qquad \text{and} \qquad
        \mathbb V [r | \beta] = \Phi_f'' (x_{ab}^\top \beta).
    \end{align}
\end{lemma}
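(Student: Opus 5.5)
The plan is to treat the GBT law as a one-parameter exponential family in the natural parameter $\theta \triangleq x_{ab}^\top \beta$, and to read off its mean and variance as derivatives of the log-normalizer $\Phi_f$.

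\textbf{Step 1: normalize the density.} The normalizing constant of $p(r \mid \beta) \propto f(r)\exp(r\theta)$ (with respect to $dF$) is $Z(\theta) = \int_{\setR} \exp(\theta r)\, dF(r) = \exp(\Phi_f(\theta))$. Hence
\begin{equation*}
dP_\theta(r) = \exp\bigl(\theta r - \Phi_f(\theta)\bigr)\, dF(r).
\end{equation*}
This depends on $\beta$ only through $\theta$, and $\Phi_f$ is exactly the cumulant generating function of $F$.

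\textbf{Step 2: justify differentiation under the integral.} This is the only genuinely delicate step. The paper assumes $f$ has finite exponential moments, so $Z$ is finite on all of $\setR$. Fix $\theta$ and a neighborhood $[\theta-\delta,\theta+\delta]$. Uniformly in $\theta'$ in this neighborhood,
\begin{equation*}
\card{r}^k e^{\theta' r} \le C_k\bigl(e^{(\theta+2\delta)r} + e^{(\theta-2\delta)r}\bigr),
\end{equation*}
because $\card{r}^k \le C'_k (e^{\delta r}+e^{-\delta r})$. The right-hand side is $F$-integrable. Dominated convergence then shows that $Z$ is $C^\infty$ (indeed analytic), with $Z^{(k)}(\theta) = \int r^k e^{\theta r}\, dF(r)$. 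If root laws are only required to have exponential moments on an open interval, the same argument works on its interior.

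\textbf{Step 3: compute the derivatives.} From $\Phi_f = \log Z$ we get
\begin{equation*}
\Phi_f'(\theta) = \frac{Z'(\theta)}{Z(\theta)} = \int r\, e^{\theta r - \Phi_f(\theta)}\, dF(r) = \expect[r \mid \beta].
\end{equation*}
Differentiating once more,
\begin{equation*}
\Phi_f''(\theta) = \frac{Z''(\theta)}{Z(\theta)} - \left(\frac{Z'(\theta)}{Z(\theta)}\right)^2 = \expect[r^2 \mid \beta] - \expect[r \mid \beta]^2 = \mathbb V[r \mid \beta].
\end{equation*}
Substituting $\theta = x_{ab}^\top \beta$ gives the lemma.

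\textbf{Transfer to SCoRa.} Proposition~\ref{prop:moments_embedding} then follows from Theorem~\ref{theo:SACEisFlex}. Comparisons carry root law $f$ and argument $x_{b_n c_n}^\top\beta$. Ratings become comparisons against the augmented entity $A+1$ with root law $g$, and there $\tilde x_{a,A+1}^\top \tilde \beta = x_a^\top\beta - \theta_0$.
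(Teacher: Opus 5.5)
Your proof is correct and essentially matches the paper's: the paper writes the conditional cumulant generating function as the shift $K_r(\lambda) = \Phi_f(\lambda + x_{ab}^\top\beta) - \Phi_f(x_{ab}^\top\beta)$ and reads off the mean and variance from its derivatives at $\lambda = 0$, which is the same exponential-tilting identity you use via $\Phi_f' = Z'/Z$ and $\Phi_f'' = Z''/Z - (Z'/Z)^2$. Your dominated-convergence step supplies the justification for differentiating under the integral sign, which the paper simply takes as ``well-known''; your transfer to SCoRa via the augmented entity $A+1$ also matches the paper.
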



\begin{proof}
Define the conditional cumulant generating function (CGF) of $r$ given $\beta$ as
\begin{equation}
K_{r}(\lambda) \triangleq \log \mathbb{E}\big[\mathrm{e}^{\lambda r} \mid \beta\big]
= \log \int_{\mathbb{R}} \mathrm{e}^{\lambda r} \, p(r \mid \beta) \, \mathrm{d}r.
\end{equation}
Substituting the likelihood yields
\begin{align}
K_{r}(\lambda) 
&= \log \int_{\setR} e^{\lambda r} \exp(r \, x_{ab}^\top \beta - \Phi_f(x_{ab}^\top \beta)) \mathrm{d} F(r) \\
&= \log \left( e^{- \Phi_f(x_{ab}^\top \beta)}  \int_{\setR} \exp \left(r (\lambda + x_{ab}^\top \beta) \right) \mathrm{d} F(r) \right) \\
&= - \Phi_f (x_{ab}^\top \beta) + \log \int_{\mathbb{R}} \exp(r (\lambda + x_{ab}^\top \beta)) \, \mathrm{d} F(r)  \\
&= \Phi_f (\lambda + x_{ab}^\top \beta) - \Phi_{f}(x_{ab}^\top \beta),
\end{align}
By definition of a cumulant generating function, 
the first derivative at $\lambda=0$ gives the mean
\begin{equation}
\mathbb{E}[r \mid \beta] = K_{r}'(0) 
= \frac{d}{d\lambda} \Big[ \Phi_{f}(\lambda + x_{a b}^\top \beta) - \Phi_{a b}(x_{a b}^\top \beta) \Big]_{\lambda=0} 
= \Phi_{f}'(x_{a b}^\top \beta).
\end{equation}
It is also well-known that the second derivative at $\lambda=0$ gives the variance:
\begin{equation}
\mathbb{V}[r \mid \beta] = K_{r}''(0) = \frac{d^2}{d\lambda^2} \Big[ \Phi_{f}(\lambda + x_{ab}^\top \beta) - \Phi_{f}(x_{a b}^\top \beta) \Big]_{\lambda=0} 
= \Phi_{f}''(x_{a b}^\top \beta).
\end{equation}
This proves the lemma.
\end{proof}

\begin{proof}[Proof of Proposition~\ref{prop:moments_embedding}]
Recall that, by Theorem~\ref{theo:SACEisFlex}, 
each SCoRa observation can be written as a flexible GBT observation,
where SCoRa hyperparameters, latent variables and observables are mapped (injectively)
to flexible-GBT hyperparameters, latent variables and observables.
Typically $x$ is mapped to $\tilde x$, $\beta$ to $\tilde \beta$ and $\bD$ to $\tilde \bD$.

\paragraph{Comparisons.}
More specifically, consider a SCoRa comparison $(b, c, r)$.
Its corresponding flexible-GBT comparison is $(b, c, r, f_{bc})$.
Moreover recall that we proved that 
$\tilde{x}_{bc}^\top\tilde{\beta}=x_{bc}^\top\beta$.
Its conditional density therefore has the flexible-GBT form
\begin{equation}
p(r\mid\tilde{\beta})
= f_{bc}(r)
  \exp\!\big(r\,x_{bc}^\top\beta
            -\Phi_{bc}(x_{bc}^\top\beta)\big).
\end{equation}
Thus the conditional law of \(r_n\) given \(\beta\) is the same as in the flexible GBT model with embedding \(x\).  
Lemma~\ref{lemma:moments} then gives
\begin{equation}
\mathbb{E}[r\mid\beta]=\Phi_{bc}'(x_{bc}^\top\beta),
\qquad
\mathbb{V}[r\mid\beta]=\Phi_{bc}''(x_{bc}^\top\beta).
\end{equation}

\paragraph{Ratings.}
A SCoRa rating $(a, t)$ is mapped to a flexible-GBT comparison $(a, A+1, t, g)$.
Moreover, we proved $\tilde{x}_{a0}^\top\tilde{\beta} = x_a^\top\beta-\theta_0$.
The conditional density is therefore
\begin{equation}
p(t\mid\tilde{\beta})
= f_{a0}(t)\,
  \exp\!\big(t(x_{a}^\top\beta-\theta_0)
            -\Phi_{g}(x_{a}^\top\beta-\theta_0)\big),
\end{equation}
which matches the flexible-GBT form with natural parameter 
\(x_{a}^\top\beta-\theta_0\).  
Applying Lemma~\ref{lemma:moments} gives
\begin{equation}
\mathbb{E}[t\mid\beta,\theta_0]
   =\Phi_{g}'(x_{a}^\top\beta-\theta_0),\qquad
\mathbb{V}[t\mid\beta,\theta_0]
   =\Phi_{g}''(x_{a}^\top\beta-\theta_0).
\end{equation}
These are precisely the identities stated in
\eqref{eq:momentr_embed}--\eqref{eq:momentt_embed}.
\end{proof}

\subsection{Pairwise Monotonicity (Proposition \ref{prop:monotonicity_SCoRa_embed})}
\label{app:pairwise_monotone}

We first formulate and prove pairwise monotonicity 
for the general flexible linear GBT model with embedding matrix $x$.

\begin{lemma}[Flexible GBT is pairwise monotonous]
\label{lemma:pairwise_monotone_flexGBT}
For any two datasets $\bD$ and $\bD'$ that differ only on the $n$-th comparison
with $a_n = a_n'$, $b_n = b_n'$, $f_n = f_n'$ and $r_n \geq r_n'$,
we have $\theta^*_{a_n b_n} (\bD) \geq \theta^*_{a_n b_n} (\bD')$.
\end{lemma}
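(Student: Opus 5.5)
The plan is to compare the two first-order optimality conditions and exploit the strong convexity of $\mathcal L$. Write $u \triangleq x_{a_n b_n}$ and let $\beta^* \triangleq \beta^*(\bD)$, $\beta'^* \triangleq \beta^*(\bD')$. The two losses differ only through the linear term of the $n$-th comparison:
\[
\mathcal L(\beta \mid \bD) = \mathcal L(\beta \mid \bD') - (r_n - r_n')\, u^\top \beta .
\]
Setting $\delta \triangleq r_n - r_n' \ge 0$, the stationarity conditions read $\nabla \mathcal L(\beta^* \mid \bD') = \delta u$ and $\nabla \mathcal L(\beta'^* \mid \bD') = 0$.

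Next, I will use monotonicity of the gradient of the convex function $\mathcal L(\cdot \mid \bD')$, which is strongly convex with parameter $\lambda_{\min}(\Sigma_\beta^{-1}) > 0$. This gives
\[
\left(\nabla \mathcal L(\beta^* \mid \bD') - \nabla \mathcal L(\beta'^* \mid \bD')\right)^\top (\beta^* - \beta'^*) \ge 0 ,
\]
which simplifies to $\delta\, u^\top(\beta^* - \beta'^*) \ge 0$. If $\delta > 0$, dividing by $\delta$ yields $u^\top \beta^* \ge u^\top \beta'^*$. Since $\theta^*_{a_n b_n} = x_{a_n}^\top \beta^* - x_{b_n}^\top \beta^* = u^\top \beta^*$, this is exactly the claim. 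If $\delta = 0$, the datasets coincide, the unique MAPs coincide, and equality holds.

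An alternative route, which I would mention if it gives more intuition, is to view $\beta^*(\delta)$ as the minimizer of $\mathcal L(\cdot \mid \bD') - \delta u^\top \beta$. By the implicit function theorem, with Hessian $H \succ 0$ since the $\Phi$ are smooth cumulant generating functions, we get $\frac{d}{d\delta} u^\top \beta^* = u^\top H^{-1} u \ge 0$. However, the monotone-gradient argument above avoids any smoothness assumptions on $\Phi_f$ beyond differentiability, so I will use it as the main proof.

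The only delicate point is confirming that $\mathcal L$ is differentiable, so that the stationarity conditions hold as equalities. This follows from the finite exponential moments of each root law: each $\Phi_f$ is then finite and smooth on $\setR$. If differentiability were in doubt, I would instead use subgradients, and the same monotonicity inequality carries through unchanged.
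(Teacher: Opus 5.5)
Your proof is correct, but its main argument differs from the paper's. The paper takes what you call the alternative route. It interpolates the $n$-th judgment $r$ over $[r_n', r_n]$ and differentiates the stationarity condition $\nabla \mathcal L(\beta^*(r)\mid\bD) + (r_n - r)\,x_{ab} = 0$ in $r$ via the implicit function theorem. This gives $\frac{d}{dr}\theta^*_{ab} = x_{ab}^\top (\nabla^2\mathcal L)^{-1} x_{ab} \ge 0$.

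You instead compare the two endpoint stationarity conditions and use monotonicity of the gradient of the convex function $\mathcal L(\cdot\mid\bD')$. That yields $\delta\, u^\top(\beta^*-\beta'^*)\ge 0$ in one step. Your version is global and needs less: only convexity and first-order optimality, and subgradients would suffice. It avoids second derivatives, Hessian inversion, and differentiability of $r\mapsto\beta^*(r)$. It also avoids integrating a derivative bound along the path, a step the paper leaves implicit.

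What the paper's version buys is an explicit local sensitivity formula $x_{ab}^\top H^{-1}x_{ab}$, which gives strict monotonicity when $x_{ab}\neq 0$. You can recover strictness as well. Strong convexity with modulus $\mu=\lambda_{\min}(\Sigma_\beta^{-1})$ upgrades your inequality to $\delta\,u^\top(\beta^*-\beta'^*)\ge \mu\|\beta^*-\beta'^*\|_2^2$. Moreover, $\beta^*=\beta'^*$ is impossible when $\delta>0$ and $u\neq 0$, since it would force $\delta u = \nabla\mathcal L(\beta'^*\mid\bD') = 0$.
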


\begin{proof}
    To ease notation, 
    denote $a \triangleq a_n$, $b  \triangleq b_n$, $f \triangleq f_n'$.
    For $r \in [r_n', r_n]$, 
    denote $\bD(r)$ the dataset obtained by taking $\bD$,
    and replacing the $n$-th comparison with $(a, b, r, f)$.
    We then have
    \begin{align}
        \mathcal L(\beta | \bD(r))
        &= \mathcal L(\beta | \bD) 
            - \left( 
                \Phi_f(x_{ab}^\top \beta) - r_n x_{ab}^\top \beta
            \right)
            + \Phi_f(x_{ab}^\top \beta) - r x_{ab}^\top \beta \\
        &= \mathcal L(\beta | \bD) + (r_n - r) x_{ab}^\top \beta.
    \end{align}
    Let $\beta^*(r) \triangleq \beta^*(\bD(r))$.
    By the optimality condition, we know that
    \begin{align}
        0 
        &= \nabla_{|\beta = \beta^*(r)} \mathcal L(\beta | \bD(r))
        = \nabla_{|\beta = \beta^*(r)} \left[ 
            \mathcal L(\beta | \bD) 
            + (r_n - r) x_{ab}^\top \beta
        \right] \\
        &= \nabla \mathcal L(\beta^*(r) | \bD) + (r_n - r) x_{ab}.
    \end{align}
    The right-hand side is smooth in both $r$.
    Differentiating by $r$ then yields
    \begin{equation}
        0 
        = \nabla^2 \mathcal L(\beta^*| \bD) \frac{d\beta^*}{dr} - x_{ab}.
    \end{equation}
    We previously proved that $\mathcal L$ was strongly convex,
    thus the Hessian matrix is definite positive,
    and hence invertible.
    Rearranging terms then yields
    \begin{align}
        \frac{d\beta^*}{dr} = \nabla^2 \mathcal L(\beta^*| \bD)^{-1} x_{ab}.
    \end{align}
    Finally, we have
    \begin{equation}
        \frac{d\theta_{ab}^*(\bD(r))}{dr} 
        = \frac{d x_{ab}^\top \beta^*(r)}{dr}
        = x_{ab}^\top \frac{d\beta^*}{dr}
        = x_{ab}^\top \nabla^2 \mathcal L(\beta^*| \bD)^{-1} x_{ab} \geq 0,
    \end{equation}
    using the fact that $\nabla^2 \mathcal L$  is definite positive.
    This concludes the proof.
\end{proof}

We can now derive the SCoRa case.

\begin{proof}[Proof of Proposition~\ref{prop:monotonicity_SCoRa_embed}]
    We note that for entities $a \in [A]$,
    the corresponding flexible-GBT MAP scores match the SCoRa MAP scores, as
    \begin{align}
        \tilde \theta^*_{a}
        \triangleq \tilde x_a^\top \tilde \beta^*
        = x_a^\top \beta^*
        = \theta^*_a.
    \end{align}
    Similarly, we have $\tilde \theta^*_{A+1} = \theta^*_0$.
    Combining these two equalities to Lemma~\ref{lemma:pairwise_monotone_flexGBT} 
    then implies pairwise monotonicity for SCoRa.
\end{proof}


\subsection{Update Properties (Proposition \ref{prop:direction_comparison})}
\label{app:update_flexGBT}

\begin{lemma}[Zero update in flexible GBT models]
\label{lemma:zero_update}
Consider any flexible GBT dataset $\bD$, 
and denote $\bD'$ the dataset obtained by adding the comparison $(a, b, r, f)$ to $\bD$.
We have the implication
\begin{align}
r = \Phi_f' \left(\theta_{ab}^*(\bD)\right)
\implies
\theta_{ab}^*(\bD') - \theta_{ab}^*(\bD).
\end{align}
\end{lemma}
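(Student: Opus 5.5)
The statement is evidently truncated: the conclusion should read $\theta_{ab}^*(\bD') = \theta_{ab}^*(\bD)$, and in fact I will show the stronger claim $\beta^*(\bD') = \beta^*(\bD)$. The approach is to check that the old MAP $\beta^* \triangleq \beta^*(\bD)$ still satisfies the first-order optimality condition for the new loss. Strong convexity (established in the Proposition of Section~\ref{sec:MAPesti}, and inherited by flexible GBT from the Gaussian prior term $\beta^\top\Sigma_\beta^{-1}\beta/2$) then forces $\beta^*$ to be the unique minimizer of $\mathcal L(\cdot\mid\bD')$.

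Concretely, adding the comparison $(a,b,r,f)$ changes the negative log-posterior by a single term:
\begin{equation*}
\mathcal L(\beta\mid\bD') = \mathcal L(\beta\mid\bD) + \Phi_f(x_{ab}^\top\beta) - r\,x_{ab}^\top\beta .
\end{equation*}
Its gradient is therefore
\begin{equation*}
\nabla\mathcal L(\beta\mid\bD') = \nabla\mathcal L(\beta\mid\bD) + \bigl(\Phi_f'(x_{ab}^\top\beta) - r\bigr)x_{ab}.
\end{equation*}
Evaluate this at $\beta=\beta^*$. The first term vanishes by optimality of $\beta^*$ for $\bD$. The second term vanishes by the hypothesis $r=\Phi_f'(\theta_{ab}^*(\bD))$, since $\theta_{ab}^*(\bD)=x_{ab}^\top\beta^*$. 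Hence $\nabla\mathcal L(\beta^*\mid\bD')=0$.

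Since $\mathcal L(\cdot\mid\bD')$ is strictly convex (indeed strongly convex), a stationary point is the unique global minimizer, so $\beta^*(\bD')=\beta^*$. It follows that $\theta^*(\bD')=x^\top\beta^*(\bD')=\theta^*(\bD)$, and in particular $\theta_{ab}^*(\bD')-\theta_{ab}^*(\bD)=0$.

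The only technical point is the differentiability of $\Phi_f$, which is needed to write the first-order condition. This follows from the finite exponential moments of the root law: $\Phi_f$ is then smooth on $\setR$, by standard differentiation under the integral sign for cumulant generating functions. Everything else is a direct computation, so I expect no real obstacle.

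This lemma is then the base case for Proposition~\ref{prop:direction_comparison}. For a general $r$, I would interpolate the added observation's value from $\Phi_f'(\theta_{ab}^*(\bD))$ to $r$ and apply the derivative computation in the proof of Lemma~\ref{lemma:pairwise_monotone_flexGBT}. That computation shows $\theta_{ab}^*$ is strictly increasing in the added value, because $x_{ab}^\top(\nabla^2\mathcal L)^{-1}x_{ab}>0$ whenever $x_{ab}\neq0$; this gives the sign identity.
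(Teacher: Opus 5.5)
Your proof is correct and follows the paper's argument: you write $\mathcal L(\cdot\mid\bD')$ as $\mathcal L(\cdot\mid\bD)$ plus the single new term, show the gradient vanishes at $\beta^*(\bD)$ under the hypothesis, and use strong convexity to conclude $\beta^*(\bD')=\beta^*(\bD)$. You also rightly read the truncated conclusion as an equality, and you are slightly more careful than the paper, which marks the factor $\Phi_f'(x_{ab}^\top\beta)-r$ as zero for general $\beta$ instead of only at $\beta=\beta^*(\bD)$.
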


\begin{proof}
    We have
    \begin{align}
        \mathcal L(\beta | \bD')
        = \mathcal L(\beta | \bD) 
            + \Phi_f (x_{ab}^\top \beta) - r x_{ab}^\top \beta.
    \end{align}
    Differentiating by $\beta$ yields
    \begin{align}
        \nabla \mathcal L(\beta | \bD')
        = \nabla \mathcal L(\beta | \bD) 
            + \underbrace{\left( \Phi_f' (x_{ab}^\top \beta) - r \right)}_{=0} x_{ab}
        = \nabla \mathcal L(\beta | \bD) .
    \end{align}
    Plugging $\beta^*(\bD)$ in the equation and using
    $\nabla \mathcal L(\beta^*(\bD) | \bD) = 0$
    then yields $\nabla \mathcal L(\beta^*(\bD) | \bD') = 0$.
    By strong convexity of $\mathcal L$,
    this implies that $\beta^*(\bD)$ is the MAP for $\bD'$,
    hence the lemma.
\end{proof}

\begin{lemma}[Update properties in flexible GBT models]
\label{lemma:updates_flexGBT}
Consider any flexible GBT dataset $\bD$, 
and denote $\bD'$ the dataset obtained by adding the comparison $(a, b, r, f)$ to $\bD$.
Then
\begin{align*}
\Sign \left( \theta_{ab}^*(\bD') - \theta_{ab}^*(\bD) \right)
=
\Sign \left( r - \Phi_f' \left(\theta_{ab}^*(\bD)\right) \right).
\end{align*}

\end{lemma}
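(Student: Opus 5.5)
We prove Lemma~\ref{lemma:updates_flexGBT} by interpolating between $\bD$ and $\bD'$ with a one-parameter family, in the spirit of Lemma~\ref{lemma:pairwise_monotone_flexGBT}. Write $\beta_0 \triangleq \beta^*(\bD)$ and $r_0 \triangleq \Phi_f'(x_{ab}^\top \beta_0)$. For $s \in \setR$, let $\bD'(s)$ be $\bD$ with the comparison $(a,b,s,f)$ added, so that $\bD'(r) = \bD'$. By Lemma~\ref{lemma:zero_update}, $\beta^*(\bD'(r_0)) = \beta_0$, hence $\theta^*_{ab}(\bD'(r_0)) = \theta^*_{ab}(\bD)$. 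The claim therefore reduces to comparing $\theta^*_{ab}(\bD'(r))$ with $\theta^*_{ab}(\bD'(r_0))$, two datasets that differ only in the value of one comparison.

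The map $s \mapsto \theta^*_{ab}(\bD'(s))$ is nondecreasing by Lemma~\ref{lemma:pairwise_monotone_flexGBT}. To get strict monotonicity, we reuse the computation in that proof. By the implicit function theorem, which applies because the Hessian is positive definite by strong convexity, $\beta^*(s)$ is differentiable and
\begin{equation*}
\frac{d\theta^*_{ab}}{ds} = x_{ab}^\top \big(\nabla^2 \mathcal L(\beta^*(s) \mid \bD'(s))\big)^{-1} x_{ab}.
\end{equation*}
This is strictly positive whenever $x_{ab} \neq 0$, since the inverse of a positive definite matrix is positive definite. Integrating from $r_0$ to $r$ then gives $\Sign(\theta^*_{ab}(\bD') - \theta^*_{ab}(\bD)) = \Sign(r - r_0)$, which is the statement.

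The degenerate case $x_{ab} = 0$ needs separate treatment. There the added term is constant in $\beta$, so both differences vanish trivially. The right-hand side is then $\Sign(r - \Phi_f'(0))$, which need not be zero, so the statement must be read with the implicit nondegeneracy assumption $x_{ab} \neq 0$. This always holds for distinct one-hot or identity embeddings, and also for the rating node $A+1$ constructed in Theorem~\ref{theo:SACEisFlex}. I expect the main obstacle to be regularity rather than this case: justifying differentiability of $\beta^*$ in $s$, which requires $\Phi_f$ to be $C^2$.

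Smoothness holds because finite exponential moments make $\Phi_f$ analytic. If one prefers to avoid it, there is an alternative route. Subtract the first-order conditions at $\beta_0$ and $\beta' \triangleq \beta^*(\bD')$, and use the mean value form $\nabla\mathcal L(\beta'\mid\bD) - \nabla\mathcal L(\beta_0\mid\bD) = H(\beta' - \beta_0)$ with $H$ an averaged positive definite Hessian. This gives $H(\beta'-\beta_0) = (r - \Phi_f'(x_{ab}^\top\beta'))x_{ab}$, so $x_{ab}^\top(\beta'-\beta_0)$ has the sign of $r - \Phi_f'(\theta'_{ab})$. Since $\Phi_f'$ is nondecreasing by convexity, this sign equals that of $r - \Phi_f'(\theta^*_{ab}(\bD))$: if $\theta'_{ab} > \theta_{ab}$, then $r > \Phi_f'(\theta'_{ab}) \geq \Phi_f'(\theta_{ab})$, and symmetrically in the other cases. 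This closes the argument.
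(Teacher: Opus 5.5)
Your proof is correct and follows the paper's route: the paper's one-line proof also combines the zero-update lemma, which anchors $s=\Phi_f'(\theta^*_{ab}(\bD))$, with monotonicity of $\theta^*_{ab}$ in the value of the added comparison. You are more careful than the paper, because its monotonicity lemma gives only a weak inequality, so strict sign agreement needs your observation that $x_{ab}^\top H^{-1}x_{ab}>0$ when $x_{ab}\neq 0$; the exception $x_{ab}=0$ you flag is genuine and the statement fails there (it arises, e.g., for two entities of the same class under the pure one-hot embedding of Section~\ref{sec:GBTdatamodel}).
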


\begin{proof}
    This follows from combining the zero update property 
    (Lemma~\ref{lemma:zero_update})
    with pairwise monotonicity (Lemma~\ref{lemma:pairwise_monotone_flexGBT}).
\end{proof}

We can now derive the SCoRa case.

\begin{proof}[Proof of Proposition~\ref{prop:direction_comparison}]
    Again we use the fact that SCoRa MAP scores 
    match their corresponding flexible GBT MAP scores.
\end{proof}

\subsection{Lipschitz resilience (Proposition \ref{prop:resilience_SCoRa})}
\label{app:resilience_flexGBT_pairs}

Given two flexible GBT datasets $\bD, \bD'$,
we define their distance $\Delta(\bD, \bD')$ 
as the minimal number of elementary edits
needed to transform $\bD$ into $\bD'$.
An elementary edit may be the removal, the addition 
or the modification of a comparison.
Moreover, for any root law $f$ (assumed to be an even function), 
we define $\textsc{Supp}(f)$ the support of $f$,
which corresponds to the legitimate comparisons $r$ that can be submitted.
Finally, for any matrix $\Sigma \in \setR^{D \times D}$, 
define the operator norm
\begin{align}
    \norm{\Sigma}{op} 
    \triangleq \sup_{x \in \setR^D - \set{0}} \frac{\norm{\Sigma x}{2}}{\norm{x}{2}}.
\end{align}
It is well-known that for definite-positive matrix,
this norm equals the largest eigenvalue of the matrix.
We can now formulate Lipschitz resilience for flexible GBT.

\begin{lemma}
\label{lemma:resilience_fgbt}
    Consider flexible GBT with an embedding matrix $x \in \setR^{D \times A}$,
    a prior covariance matrix $\Sigma_\beta \in \setR^{D \times D}$
    and a set $\mathcal F$ of root laws.
    Then, for any two flexible-GBT datasets $\bD, \bD'$,
    \begin{align}
        \norm{\beta^*(\bD) - \beta^*(\bD')}{2}
        &\leq \Delta(\bD, \bD') 
            \norm{\Sigma_\beta}{op} 
            \norm{x}{2, \infty}
            \sup_{f \in \mathcal F} \textsc{Supp} (f) \\
        \norm{\theta^*(\bD) - \theta^*(\bD')}{2}
        &\leq \Delta(\bD, \bD') 
            \norm{\Sigma_\beta}{op} 
            \norm{x}{2, \infty}
            \norm{x}{2}
            \sup_{f \in \mathcal F} \textsc{Supp} (f).
    \end{align}
\end{lemma}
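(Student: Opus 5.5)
The plan is to reduce to a single elementary edit and then apply a strong-convexity (monotone-gradient) argument to the flexible-GBT loss $\mathcal L(\cdot\mid\bD)$. By the triangle inequality along a shortest edit path $\bD=\bD_0,\bD_1,\ldots,\bD_\Delta=\bD'$, it suffices to prove
\[
\norm{\beta^*(\bD_k)-\beta^*(\bD_{k+1})}{2}\le \norm{\Sigma_\beta}{op}\norm{x}{2,\infty}\sup_{f\in\mathcal F}\textsc{Supp}(f).
\]
I will read $\sup_{f\in\mathcal F}\textsc{Supp}(f)$ as the bound $R$ on legitimate values $|r|$. A modification is a removal followed by an addition, and a removal is an addition read backwards. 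So I would treat a single addition and, if the path counts a modification as one edit, handle the modification directly: the two losses then differ by $\psi(\beta)=(r'-r)\,x_{ab}^\top\beta$, whose gradient is explicit. The second inequality then follows from the first, because $\theta^*=x^\top\beta^*$ gives $\norm{\theta^*-\theta'^*}{2}\le\norm{x}{2}\norm{\beta^*-\beta'^*}{2}$.

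For an addition, I write $\mathcal L(\beta\mid\bD')=\mathcal L(\beta\mid\bD)+\psi(\beta)$ with $\psi(\beta)=\Phi_f(x_{ab}^\top\beta)-r\,x_{ab}^\top\beta$, which is convex. The prior term makes $\mathcal L(\cdot\mid\bD)$ strongly convex with modulus $\mu=\lambda_{\min}(\Sigma_\beta^{-1})=\norm{\Sigma_\beta}{op}^{-1}$. Set $\beta=\beta^*(\bD)$ and $\beta'=\beta^*(\bD')$, and write $\delta=\beta'-\beta$. The optimality conditions are $\nabla\mathcal L(\beta\mid\bD)=0$ and $\nabla\mathcal L(\beta'\mid\bD)=-\nabla\psi(\beta')$. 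Strong monotonicity of the gradient then gives
\[
\mu\norm{\delta}{2}^2\le\bigl(r-\Phi_f'(x_{ab}^\top\beta')\bigr)\,x_{ab}^\top\delta ,
\]
and Cauchy--Schwarz turns this into $\norm{\delta}{2}\le\norm{\Sigma_\beta}{op}\,\absv{r-\Phi'_f(x_{ab}^\top\beta')}\,\norm{x_{ab}}{2}$.

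The remaining step, which I expect to be the main obstacle, is bounding the product $\absv{r-\Phi_f'(u')}\,\norm{x_{ab}}{2}$ by $R\norm{x}{2,\infty}$ with no extra constant. The naive estimates give $\absv{\Phi_f'}\le R$ (by Lemma~\ref{lemma:moments}, $\Phi_f'(u')$ is the mean of a law supported on $\textsc{Supp}(f)$), hence $\absv{r-\Phi_f'}\le 2R$, and $\norm{x_{ab}}{2}\le 2\norm{x}{2,\infty}$. Together these lose a factor of $4$, so they are not enough.

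To remove that factor, I would first use the sign information from Lemma~\ref{lemma:updates_flexGBT}: the quantity $x_{ab}^\top\delta=u'-u$ has the sign of $r-\Phi_f'(u)$, and $\Phi_f'$ is nondecreasing. Together these give $0\le (r-\Phi_f'(u'))(u'-u)\le (r-\Phi_f'(u))(u'-u)$, so it is enough to control $\absv{r-\Phi_f'(u)}$ at the old optimum, where $r$ and $\Phi_f'(u)$ lie on opposite sides of the update. I would then try to absorb the remaining factors by reading $\norm{x}{2,\infty}$ as the maximal norm of the comparison columns $x_{ab}$ that actually enter the flexible-GBT loss, since in the flexible model every datum is indexed by a difference vector. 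I would also use the evenness of $f$, which gives $\Phi_f'$ odd and lets the deviation $r-\Phi_f'(u)$ be measured against the support width. If this sharpening fails, the argument above still proves the lemma with the explicit constant tracked through these two estimates.
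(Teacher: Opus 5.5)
Your fallback argument is correct and is essentially the paper's proof. The steps are: the edit path, strong monotonicity of $\nabla\mathcal L(\cdot\mid\bD)$, Cauchy--Schwarz, then $\absv{r-\Phi_f'}\le 2R$ and $\norm{x_{ab}}{2}\le 2\norm{x}{2,\infty}$. The paper uses the same two estimates to bound $\nabla\mathcal L(\beta\mid\bD)-\nabla\mathcal L(\beta\mid\bD')$ uniformly in $\beta$, summing over added, removed and modified comparisons. It evaluates this at $\beta^*(\bD')$ and converts it via $\norm{\Sigma_\beta}{op}^{-1}$-strong convexity. Both routes give $4\,\Delta(\bD,\bD')\norm{\Sigma_\beta}{op}\norm{x}{2,\infty}\sup_{f\in\mathcal F}\textsc{Supp}(f)$. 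The paper's proof ends with exactly this factor $4$, which is also the $4$ in the constant $L$ of Proposition~\ref{prop:resilience_SCoRa}. The printed statement has simply dropped it.

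The genuine gap is the step you single out as the main obstacle: the $4$ cannot be removed, because the inequality as printed is false. Take $x=I_2$ and $\Sigma_\beta=\sigma^2 I_2$, with the even binary root law ($\Phi_f=\log\cosh$, $R=1$), $\bD=\emptyset$ and $\bD'=\set{(1,2,1,f)}$. Then $\beta^*(\bD)=0$ and $\beta^*(\bD')=\sigma^2(1-\tanh u)\,(1,-1)^\top$, where $u=2\sigma^2(1-\tanh u)$. Hence $\norm{\beta^*(\bD')-\beta^*(\bD)}{2}=\sqrt2\,\sigma^2(1-\tanh u)$, which exceeds the claimed bound $\sigma^2$ once $\sigma^2$ is small (about $0.118$ versus $0.1$ at $\sigma^2=0.1$). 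Since $\theta^*=\beta^*$ and $\norm{x}{2}=1$, the second inequality fails too.

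None of your three devices helps:
\begin{itemize}
\item Evenness of $f$ is already present in this example.
\item The sign argument only transfers the residual to the old optimum, and $\absv{r-\Phi_f'(u)}$ gets close to $2R$ when earlier data saturate $\Phi_f'(x_{ab}^\top\beta)$ near $-R$.
\item Reading $\norm{x}{2,\infty}$ as $\max\norm{x_{ab}}{2}$ proves a different statement, and that version also fails in such saturated configurations.
\end{itemize}
In fact the ratio can be pushed arbitrarily close to $4$, so the paper's constant is sharp. To see this, take $x_b=-x_a$ and add $(a,b,R,f)$ after many comparisons with $r=-R$ on a pair whose difference vector is nearly orthogonal to $x_{ab}$. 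The correct target is the lemma with the factor $4$, and your fallback already proves it.
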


\begin{proof}
    To move from $\bD$ to $\bD'$, 
    denote $\bD^-$ the comparisons that had to be removed,
    $\bD^+$ those that had to be added
    and $\bD^{\neq}$ those that were modified.
    We then have
    \begin{align}
        \mathcal L(\beta|\bD)
        &= \mathcal L(\beta|\bD')
        + \sum_{(a, b, r, f) \in \bD^+} \left(
            \Phi_f(x_{ab}^\top \beta) - r x_{ab}^\top \beta
        \right) \notag \\
        &\qquad - \sum_{(a, b, r, f) \in \bD^-} \left(
            \Phi_f(x_{ab}^\top \beta) - r x_{ab}^\top \beta
        \right)
        + \sum_{(a, b, r, r', f) \in \bD^{\neq}} \left(
            r' - r
        \right) x_{ab}^\top \beta.
    \end{align}
    Rearranging terms and differentiating by $\beta$ yields, for all $\beta$,
    \begin{align}
        &\norm{
            \nabla \mathcal L(\beta|\bD) 
            - \nabla \mathcal L(\beta|\bD')
        }{2}
        \leq
        \sum_{(a, b, r, f) \in \bD^+ \cup \bD^-} \norm{
            \left( \Phi_f(x_{ab}^\top \beta) - r  \right) x_{ab}
        }{2} 
        + \sum_{(a, b, r, r', f) \in \bD^{\neq}} \norm{
            (r' - r) x_{ab}
            }{2} \\
        &\leq
        \sum_{(a, b, r, f) \in \bD^+ \cup \bD^-} 
            \absv{ \Phi_f(x_{ab}^\top \beta) - r } 
            \norm{x_{ab}}{2}
        + \sum_{(a, b, r, r', f) \in \bD^{\neq}} 
            \absv{r' - r} \norm{x_{ab}}{2} \\
        &\leq
        \sum_{(a, b, r, f) \in \bD^+ \cup \bD^-} 
            \left( \absv{ \Phi_f(x_{ab}^\top \beta)} + \absv{r} \right)
            \left( \norm{x_a}{2} + \norm{x_b}{2} \right)
        + \sum_{(a, b, r, r', f) \in \bD^{\neq}} 
            \absv{r' - r} \norm{x_{ab}}{2}.
    \end{align}
    For any comparison $(a, b, r, f)$,
    note that $\absv{ \Phi_f(x_{ab}^\top \beta) } \leq \textsc{Supp}(f)$
    and $\absv{ r } \leq \textsc{Supp}(f)$.
    Similarly, $\absv{ r' } \leq \textsc{Supp}(f)$.
    Bounding also $\norm{x_a}{2}$ and $\norm{x_b}{2}$
    by $\norm{x}{2, \infty}$, we then have
    \begin{align}
        \forall \beta \mathsep
        \norm{
            \nabla \mathcal L(\beta|\bD) 
            - \nabla \mathcal L(\beta|\bD')
        }{2}
        &\leq \card{\bD^+ \cup \bD^-} 
            2 \set{ \sup_{f \in \mathcal F} \textsc{Supp}(f) }
            \cdot 2 \norm{x}{2, \infty} \\
        &\leq 4 \Delta(\bD, \bD') \norm{x}{2, \infty}
            \sup_{f \in \mathcal F} \textsc{Supp}(f).
    \end{align}
    Plugging $\beta^*(\bD')$ then yields
    \begin{equation}
        \norm{\nabla \mathcal L(\beta^*(\bD') | \bD)}{2}
        \leq 4 \Delta(\bD, \bD') \norm{x}{2, \infty}
            \sup_{f \in \mathcal F} \textsc{Supp}(f).
    \end{equation}
    We now leverage the strong convexity of $\mathcal L$
    with parameter $\lambda_{min}(\Sigma_\beta^{-1}) = \norm{\Sigma_\beta}{op}^{-1}$,
    we also have
    \begin{align}
        \norm{\beta^*(\bD) - \beta^*(\bD')}{2}
        &\leq \norm{\Sigma_\beta}{op} \norm{
            \underbrace{\nabla L(\beta^*(\bD) | \bD)}_{=0}
            - \nabla \mathcal L(\beta^*(\bD') | \bD)
        }{2} \\
        &= \norm{\Sigma_\beta}{op} \norm{\nabla \mathcal L(\beta^*(\bD') | \bD)}{2} \\
        &\leq 4 \Delta(\bD, \bD') \norm{\Sigma_\beta}{op} \norm{x}{2, \infty}
            \sup_{f \in \mathcal F} \textsc{Supp}(f).
    \end{align}
    To derive the case for $\theta^*$,
    we note that $\theta^* = x^\top \beta^*$, hence
    \begin{align}
        \norm{\theta^*(\bD) - \theta^*(\bD')}{2}
        &= \norm{x^\top (\beta^*(\bD) - \beta^*(\bD'))}{2}
        \leq \norm{x}{2} \norm{\beta^*(\bD) - \beta^*(\bD')}{2} \\
        &\leq 4 \Delta(\bD, \bD') \norm{\Sigma_\beta}{op} 
            \norm{x}{2} \norm{x}{2, \infty}
            \sup_{f \in \mathcal F} \textsc{Supp}(f).
    \end{align}
    This concludes the proof.
\end{proof}

\begin{proof}[Proof of Proposition \ref{prop:resilience_SCoRa}]

By applying Lemma~\ref{lemma:resilience_fgbt} for this flexible GBT model
constructed out of a SCoRa model, 
we have the Lipschitz bound on \(\tilde{\beta}^*\):
\begin{align}
\|\tilde{\beta}^*(\tilde \bD) - \tilde{\beta}^*(\bD')\|_2
\leq 4 \sigma_{\max}^2 \max \set{R_{\max}, T_{\max}} \|\tilde{x}\|_{2,\infty}
    \Delta\big(\bD, \bD'\big),
\end{align}
where, due to $\|x_0\| = 1$,
\begin{align}
\|\tilde{x}\|_{2,\infty} 
=  \max_{a \in [A]_0} \|\tilde{x}_a\|_2 
= \max \set{ \max_{a\in [A]} \|x_a\|_2, 1 }
= \max \set{ \|{x}\|_{2,\infty} , 1 }.
\end{align}
Since \(\tilde{\beta}^* = (\beta^*, \theta_0^*)\), 
the triangle inequality yields separate bounds:
\begin{align}
\|\beta^*(\bD) - \beta^*(\bD')\|_2 
\leq 4 \sigma_{\max}^2 \max \set{R_{\max}, T_{\max}} 
    \max \set{ \|{x}\|_{2,\infty} , 1 } \, 
    \Delta\big(\bD, \bD'\big),
\end{align}
and
\begin{align}
\label{eq:part1resilience}
|\theta_0^*(\bD) - \theta_0^*(\bD')| 
\leq 4 \sigma_{\max}^2 \max \set{R_{\max}, T_{\max}} 
    \max \set{ \|{x}\|_{2,\infty} , 1 } \, 
    \Delta\big(\bD, \bD'\big).    
\end{align}
Finally, since the scores satisfy $\theta^* = x^\top \beta^*$,
we have  
\begin{align} \label{eq:part2resilience}
\|\theta^*(\bD) - \theta^*(\bD')\|_2 
\leq \|x\|_{2} \|\beta^*(\bD) - \beta^*(\bD')\|_2,
\end{align}
and therefore
\begin{align}
\|\theta^*(\bD) - \theta^*(\bD')\|_2 
\leq 4 \sigma_{\max}^2 
    \max \set{R_{\max}, T_{\max}} \|x\|_{2}
    \max \set{ \|{x}\|_{2,\infty} , 1 } \, 
    \Delta\big(\bD, \bD'\big).
\end{align}
Finally,the relations \eqref{eq:thetatheta0Lipschitz} follows exactly from \eqref{eq:part1resilience} and \eqref{eq:part2resilience}.
\end{proof}

\section{Additional Experiments}
\label{app:experiments}

We present an additional experiment to compare active learning when the first phase is done with ratings (as in Section \ref{sec:SCoRa_value}) to active learning when the first phase is done with comparisons.
The structure of the experiment is very similar to the ones of Section \ref{sec:SCoRa_value}.
We have a fixed budget of $\budget=10000$, which we can allocate to comparisons or ratings, or a mix of the two. 
In addition to the setting of Section \ref{sec:SCoRa_value}, we compare the performance when using comparisons instead of ratings for the first phase of the training.
In other words, the $\pc$ fraction of the budget is allocated to comparisons as in Section \ref{sec:SCoRa_value},
but the first estimate of the scores (which is done using the $1-\pc$ fraction of the budget) is this time also done with comparisons.
We use $5$-ary ratings and binary comparisons.
The results are presented in Figure \ref{fig:baseline}.

\begin{figure*}[ht]
\centering
    \includegraphics[width=0.49\linewidth]{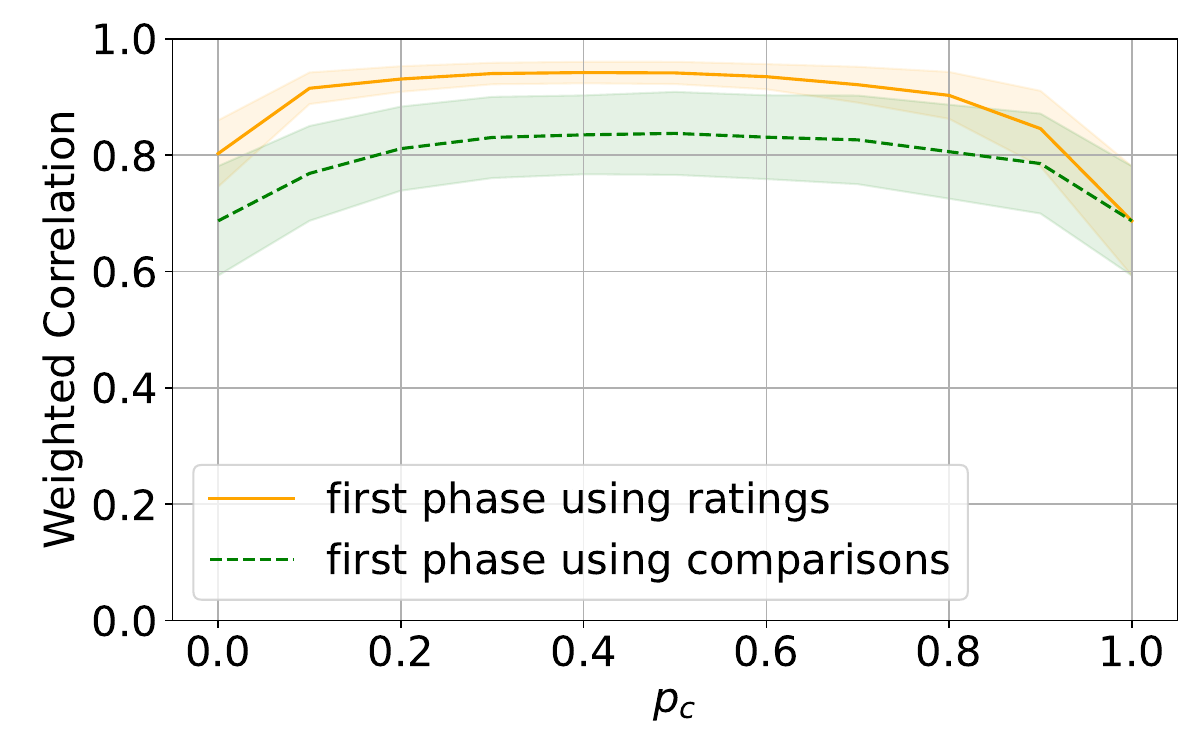}   
    \caption{Comparison of active learning when the first phase is done with ratings to active learning when the first phase is done with comparisons.
    We use parameters $k_r=5$, $k_c=2$, $\budget=10000$, $c_r=1$, $c_c=3$ and one-hot-encoded embeddings.}
    \label{fig:baseline}
\end{figure*}

\paragraph{Results}
We can see on Figure \ref{fig:baseline} that using ratings for the first part of the learning achieves the overall best recovery of the true scores, for a value of $\pc$ around $0.5$.
We observe the same plateau effect as in Section \ref{sec:SCoRa_value}, with fast improvement when getting away from extreme values of $p_c$, although not as strong as Figure \ref{fig:sweet-one-hot}.
This confirms the results of Section \ref{sec:SCoRa_value} by showing that mixing comparisons and ratings for active learning is the best option in this setting, also beating an active learning using comparisons for the first phase.


\end{document}